\documentclass[lettersize,journal]{IEEEtran}
\usepackage{amsmath,amsfonts}
\usepackage{algorithmic}
\usepackage{algorithm}
\usepackage{array}
\usepackage[caption=false,font=normalsize,labelfont=sf,textfont=sf]{subfig}
\usepackage{textcomp}
\usepackage{stfloats}
\usepackage{url}
\usepackage{verbatim}
\usepackage{graphicx}
\usepackage{cite}
\usepackage{pifont}
\usepackage{bm}
\usepackage{amsthm}
\usepackage{multirow}
\usepackage{booktabs}
\hyphenation{op-tical net-works semi-conduc-tor IEEE-Xplore}

\begin{document}
	
	\title{Distribution Fitting for Combating Mode Collapse \\in Generative Adversarial Networks}
	
	\author{Yanxiang Gong,~\IEEEmembership{Student Member,~IEEE,}
		Zhiwei Xie,
		Guozhen Duan,
		Zheng Ma,
		Mei Xie
		\thanks{Corresponding author: Mei Xie.}
		\thanks{The authors are with the School of Information and Communication Engineering, University of Electronic Science and Technology of China, Chengdu 611731, China (e-mail: yxgong@std.uestc.edu.cn; xiezhiwei@std.uestc.edu.cn; 202022011405@std.uestc.edu.cn; zma@uestc.edu.cn; mxie@uestc.edu.cn).}
	}

\markboth{Journal of \LaTeX\ Class Files,~Vol.~14, No.~8, August~2021}%
{Shell \MakeLowercase{\textit{et al.}}: A Sample Article Using IEEEtran.cls for IEEE Journals}

\IEEEpubid{0000--0000/00\$00.00~\copyright~2021 IEEE}
\maketitle

\begin{abstract}
	Mode collapse is a significant unsolved issue of generative adversarial networks. In this work, we examine the causes of mode collapse from a novel perspective. Due to the nonuniform sampling in the training process, some sub-distributions may be missed when sampling data. As a result, even when the generated distribution differs from the real one, the GAN objective can still achieve the minimum. To address the issue, we propose a global distribution fitting (GDF) method with a penalty term to confine the generated data distribution. When the generated distribution differs from the real one, GDF will make the objective harder to reach the minimal value, while the original global minimum is not changed. To deal with the circumstance when the overall real data is unreachable, we also propose a local distribution fitting (LDF) method. Experiments on several benchmarks demonstrate the effectiveness and competitive performance of GDF and LDF.
	
\end{abstract}

\begin{IEEEkeywords}
	Generative Adversarial Nets, mode collapse, nonuniform sampling, distribution fitting
\end{IEEEkeywords}


\section{Introduction}
\label{sec:intro}
\IEEEPARstart{G}{enerative} adversarial networks (GANs)~\cite{gan} have attracted much attention since the first introduction. Owing to its excellent performance, this adversarial training scheme became one of the most significant achievements in the machine learning field of the past decades. The framework simultaneously trains two sub-networks: a generator that generates samples from a noise vector and a discriminator designed to distinguish real and fake samples. In the optimal case, the generated samples cannot be discriminated from the real ones. However, GAN training is dynamic and sensitive to nearly every aspect of its setup, from optimization parameters to model architecture~\cite{biggan}. Thus, GANs encounter challenges in the training process. Mode collapse is one of the main obstacles, which represents that GANs may fail to model the full distribution and some minor modes might not be captured. Even a simple Gaussian mixture model can work better than GANs on capturing the underlying distribution and provides an explicit representation of the statistical structure~\cite{gmms}. To address the issue, researchers presented various methods, including applying multi-generators~\cite{madgan,mgogan}, improving the training strategy~\cite{unrolledgan,adagan,lsvgd,bayesiancyclegan}, using penalty modules~\cite{pacgan,svdd, veegan,dpm,mggan} and adding constraints~\cite{wgan,wgangp,dragan,distgan,sngan,srgan,mescheder2018training,dualaae,msgan,mdgan}. There are also methods for unbalanced or very small samples under different categories~\cite{salazar2021generative,grover2019bias,yu2020inclusive}. However, mode collapse remains a troublesome issue in complicated applications of GANs. Researchers are still working hard on it~\cite{divco,uctgan,maskgan,stargan,divgan}. Therefore, the study of methods for combating mode collapse remains of great importance.

In GANs~\cite{gan}, the output of the discriminator represents the probability that the input sample came from the real data. The authors demonstrated that the generator's global minimum under the optimal discriminator is reached if and only if the generated distribution is the same as the real one. However, we notice that the minimum value is also reachable while the generated and real distributions are not the same due to the nonuniform sampling in the training process. In this case, the sampled distribution cannot adequately denote the complete real distribution, and the GAN objective can occasionally reach the minimum in practical training. As a result, the risks of mode collapse will increase and GAN training will be sensitive, requiring us to adjust each parameter carefully. Therefore, we consider that the nonuniform sampling issue is one of the reasons which leads to mode collapse. However, the existing techniques mostly aim at other aspects of the mode collapse issues, which cannot handle the mentioned problem well. Meanwhile, some disadvantages are also obstacles to applications of the existing methods, such as heavy computation, requirements of manual design, and lack of interpretability. Thus, related research on this problem is still valuable and indispensable.

In this work, we propose a global distribution fitting method to combat the mode collapse problem. We design a penalty to constrain the generated data's distribution and incorporate it into the training objective. We prove that on the basis of not changing the global minimum of the GAN objective, the method will make it harder to reach the minimum value while the generated and real distributions are not the same. Furthermore, to cope with situations where the overall real data is unreachable, we propose a local distribution fitting method, utilizing sampled distribution to match the real one. The proposed methods will not bring heavy computation or require complicated manual design, making it easy to implement them in other frameworks. The experiments on several benchmarks demonstrate the effectiveness and competitive performance. To summarize, there are three main contributions in this work:

\IEEEpubidadjcol

\begin{itemize}
	\item We analyze the causes of mode collapse from a new perspective. Because of the nonuniform sampling in practical training, the GAN objective could reach the minimum value when the generated and real distributions are not the same. Therefore, the training will be sensitive and risks of mode collapse will increase.
	\item We propose global and local distribution fitting methods to suppress mode collapse. On the basis of not changing the global minimum of the GAN objective, the method will make the objective harder to reach the minimum when the generated distribution is not the same as the real one.
	\item We demonstrate the effectiveness of the proposed method with both theoretical derivation and experiments. The theorems and proofs are also of great reference value for other methods based on distribution or feature matching.
\end{itemize}

\section{Related Work}
\label{sec:related}
Research on mode collapse problems has been an attractive field in recent years. Existing methods can be roughly classified into four categories: applying multi-generators, improving training, using penalty modules, and adding constraints. Each kind of method is effective and aims at different aspects of the issue.

\subsection{Applying Multi-Generators}
A simple but effective method is to apply multi-generators in GANs. For instance, Ghosh~\textsl{et al.}~\cite{madgan} proposed to utilize multi-generators to generate different data. Li~\textsl{et al.}~\cite{mgogan} also applied multiple generators to learn the mapping from the randomized space to the original data space. The performance is excellent, but a manual design is required for the number of generators. Therefore, these methods can be used to maximum advantage in tasks where the mode number is known.

\subsection{Improving Training}
Some researchers tried to improve the training strategies to suppress mode collapse. Metz~\textsl{et al.}~\cite{unrolledgan} proposed to "unroll" the optimizer to suppress mode collapse. Tolstikhin~\textsl{et al.}~\cite{adagan} borrowed some ideas from AdaBoost to address the problem of missing modes. Wang~\textsl{et al.}~\cite{lsvgd} proposed an optimization method named Langevin Stein Variational Gradient Descent to stabilize the training. You~\textsl{et al.}~\cite{bayesiancyclegan} proposed to optimize the model with maximum a posteriori estimation. The advantage of these methods is that they only change the training strategy, so they can be easily applied in any framework. However, additional computation will be required, and thereby they might not be efficient under restricted hardware conditions.

\subsection{Using Penalty Modules}
Using penalty modules is also an idea for suppressing mode collapse. The central idea of these methods is to penalize the generator when the generated data diversity is insufficient. Srivastava~\textsl{et al.}~\cite{veegan} featured a reconstructor network to reverse the action of the generator by mapping from data to noise. Lin~\textsl{et al.}~\cite{pacgan} modified the discriminator to make decisions based on multiple samples from the same class, which penalizes generators with mode collapse. Chong~\textsl{et al.}~\cite{svdd} utilized a regularizer to penalize the mini-batch variance. Pei~\textsl{et al.}~\cite{dpm} introduced a pluggable diversity penalty module to enforce the generator to generate images with distinct features. Bang~\textsl{et al.}~\cite{mggan} leveraged a guidance network to induce the generator to learn the overall modes. The methods directly require the generated data to be diverse, which achieve competitive performance.

\subsection{Adding Constraints}
Adding constraints is another simple and effective way to suppress mode collapse. Arjovsky~\textsl{et al.}~\cite{wgan} first proposed to clip the gradient to satisfy Lipschitz continuity. This idea brought lots of related research. Then Gulrajani~\textsl{et al.}~\cite{wgangp} improved the method by introducing gradient penalty, which is more stable. Kodali~\textsl{et al.}~\cite{dragan} also proposed to introduce the penalty to limit the gradient around some real data points. Miyato~\textsl{et al.}~\cite{sngan} proposed to constrain the spectral norm of each network layer to control the Lipschitz constant. Liu~\textsl{et al.}~\cite{srgan} found an issue of spectral collapse and proposed a method to alleviate the issue. Mescheder~\textsl{et al.}~\cite{mescheder2018training} analyzed the regularization strategies and proved local convergence for simplified gradient penalties. Ge~\textsl{et al.}~\cite{dualaae} introduced the clustering regularization term in the network to avoid mode collapse. The aim of these methods is to limit the gradient because the gradient will be sharp when mode collapse occurs~\cite{dragan}.

In addition to limiting gradients, other constraints were also applied. Tran~\textsl{et al.}~\cite{distgan} proposed a distance constraint to enforce compatibility between the latent sample distances and the corresponding data sample distances. Mao~\textsl{et al.}~\cite{msgan} proposed maximizing the distance ratio between generated images. Eghbal-zadeh~\textsl{et al.}~\cite{mdgan} encouraged the discriminator to form clusters, which will lead the generator to discover different modes. Compared to limiting gradients, these methods explicitly specify the generated diversity. In this work, we propose a new constraint from a novel perspective. The parameters of the real and generated distributions are constrained to be almost the same. Thus, to maintain the consistency of parameters, the data modes cannot be missed.

\section{Analysis of Mode Collapse}
The training objective of the original GANs~\cite{gan} is
\begin{equation}\label{eq1}
	\begin{aligned}
		\min_G\max_DV(D,G)=&\mathbb{E}_{\bm{x}\sim p_{data}(\bm{x})}[log(D(\bm{x}))]+\\
		&\mathbb{E}_{\bm{z}\sim p_{z}(\bm{z})}[log(1-D(G(\bm{z})))]
	\end{aligned}
\end{equation}
where $\bm{x}$ is a real sample and $\bm{z}$ is a random noise vector. The authors have demonstrated that the optimal $D$ is
\begin{equation}
	D_G^*(\bm{x})=\frac{p_{data}(\bm{x})}{p_{data}(\bm{x}) + p_{g}(\bm{x})}
\end{equation}
where $p_{data}$ denotes the real data distribution and $p_g$ denotes the generated data distribution. The virtual training criterion 
\begin{equation}
	C(G)=\max_DV(D,G)
\end{equation}
which is a Jensen–Shannon divergence, will reach the global minimum $-log(4)$ when $p_g=p_{data}$. At that point, $D_G^*(\bm{x})=\frac{1}{2}$. In later studies, there are also improved frameworks that utilized Wasserstein distance~\cite{wgan,wgangp} or Hinge loss~\cite{lim2017geometric}, which are also proven to reach the minimum if and only if $p_g=p_{data}$ and effectively learn the distribution.

However, there may be some issues. Because we use a sampled batch of data whose distribution is $p_s$ to replace $p_{data}$ in practical training, nonuniform sampling may occur in some cases. Thus, we conjecture that the minimum value can be reached when $p_g\neq p_{data}$ in some cases.

First, we will give the mathematical definition of nonuniform sampling. The prerequisite of nonuniform sampling is that $p_{data}$ is a mixed distribution that consists of $\bm{\phi}(\bm{x})=\{\phi_1(\bm{x}),\phi_2(\bm{x}),...,\phi_K(\bm{x})\},~K>1$, which can be linearly expressed as
\begin{equation}\label{eq4}
	p_{data}(\bm{x})=\sum_{k=1}^{K}\alpha_k\phi_k(\bm{x})
\end{equation} 
where $\alpha_k\in(0,1)$ is the weight of the $k$-th sub-distribution.

\newtheorem{definition}{Definition}
\begin{definition}
	In one training iteration of GANs, if each real data in a batch is sampled from one of, or the combination of several sub-distributions $\{\phi_m(\bm{x}),\phi_{m+1}(\bm{x}),...,\phi_{n}(\bm{x})\}$ of $p_{data}$, where $1\le m\le n\le K$ and $n-m<K-1$, nonuniform sampling occurs.
\end{definition}

According to the definition, it can be observed that some sub-distributions are missed when nonuniform sampling occurs because no data is sampled from them. Thus, the training objective can reach the minimum in some iterations when $p_g\neq p_{data}$.

\newtheorem{lemma}{Lemma}[]
\begin{lemma} \label{lemma1}
	In the training process of GANs, $C(G)$ can sometimes reach the minimum even if $p_g\neq p_{data}$.
\end{lemma}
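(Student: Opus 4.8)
The plan is to mirror the classical GAN optimality argument recalled in the excerpt, but with the true data distribution $p_{data}$ replaced by the sampled batch distribution $p_s$, and then to exhibit a generator that attains the minimum at $p_g = p_s \neq p_{data}$. First I would fix $G$ and reuse the pointwise inner maximization over $D$ already stated above, observing that in one practical iteration the expectation $\mathbb{E}_{\bm{x}\sim p_{data}}$ is actually realized as an empirical average over the sampled batch. Hence the effective real distribution entering $V(D,G)$ is $p_s$ rather than $p_{data}$, and the optimal discriminator becomes $D_G^*(\bm{x}) = p_s(\bm{x})/(p_s(\bm{x}) + p_g(\bm{x}))$.

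Next I would substitute $D_G^*$ back into $V(D,G)$ and carry out the same algebraic rearrangement that yields the Jensen--Shannon form, obtaining
\[
	C(G) = -\log(4) + 2\cdot \mathrm{JSD}(p_s \,\|\, p_g),
\]
where $\mathrm{JSD}$ denotes the Jensen--Shannon divergence. Since $\mathrm{JSD}$ is nonnegative and vanishes exactly when its two arguments coincide, this form makes transparent that the minimum value $-\log(4)$ of the practical criterion is attained precisely when $p_g = p_s$.

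Finally I would invoke the Definition of nonuniform sampling to close the argument. In an iteration where nonuniform sampling occurs, the batch draws only from a strict subset $\{\phi_m,\dots,\phi_n\}$ of the $K$ sub-distributions, so $p_s$ is a renormalized mixture over that subset and therefore $p_s \neq p_{data}$. Taking a generator whose output distribution equals $p_s$ then gives $C(G) = -\log(4)$ with $p_g = p_s \neq p_{data}$, which is exactly the claim. The qualifier \emph{sometimes} is handled automatically, since we only need the existence of such an iteration rather than that every iteration behaves this way.

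I expect the main obstacle to be conceptual rather than computational: the algebra reproducing the $\mathrm{JSD}$ form is routine, so the real work is justifying the substitution of $p_s$ for $p_{data}$, namely arguing that within a single training step the discriminator is effectively optimized against the empirical batch distribution $p_s$, and that the (idealized, nonparametric) generator family is rich enough to realize $p_g = p_s$. I would state this modeling assumption explicitly so that the subsequent optimality computation carries over verbatim from the original GAN analysis.
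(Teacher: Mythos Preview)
Your proposal is correct and follows essentially the same approach as the paper: replace $p_{data}$ by the sampled batch distribution $p_s$, invoke the definition of nonuniform sampling to obtain $p_s\neq p_{data}$, and then take $p_g=p_s$ so that the practical criterion attains $-\log(4)$. The paper's proof is in fact terser---it simply asserts that $C(G)$ reaches its minimum once $p_g=p_s$ without rederiving the Jensen--Shannon form---so your explicit JSD computation and your remark on the modeling assumptions (empirical batch distribution, generator expressiveness) add rigor without changing the route.
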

\begin{proof}
	In one training iteration of GANs, we will sample a batch of data to represent $p_{data}$, whose distribution is $p_s$. When nonuniform sampling occurs, the sampled data distribution can be expressed as
	\begin{equation}\label{eq5}
		p_{s}(\bm{x})=\sum_{k=m}^{n}\beta_k\phi_k(\bm{x})
	\end{equation}
	where $\beta_k\in (0,1)$ is the weight of the $k$-th sub-distribution. Obviously, it is possible that $p_s\neq p_{data}$ because $n-m < K-1$. Then if $p_{g}$ only learns the sampled distribution which achieves $p_g=p_{s}$, $C(G)$ will reach the minimum when $p_g\neq p_{data}$.
\end{proof}

Lemma~\ref{lemma1} indicates that GAN loss can reach the minimum when the generator is not well trained. In this case, the generator only learns a part of the real distribution, which brings risks of mode collapse. We can identify several circumstances where nonuniform sampling is more likely to occur based on the lemma.

\newtheorem{thm}{Theorem}[]
\begin{thm}\label{thm1}
	In the training process, nonuniform sampling is more likely to occur when the overlaps between sub-distributions are smaller.
\end{thm}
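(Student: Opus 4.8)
The plan is to recast ``nonuniform sampling occurs'' as a concrete event about a sampled batch and then to show that the probability of this event increases monotonically as the overlap between the sub-distributions shrinks. By Definition~1, nonuniform sampling occurs exactly when the sampled batch can be explained by a \emph{strict} subset of the components $\{\phi_1,\dots,\phi_K\}$; equivalently, at least one sub-distribution $\phi_k$ is \emph{not represented} by any data point in the batch. So first I would fix a batch size $N$, treat the draws as i.i.d.\ from $p_{data}$, and write the probability of nonuniform sampling as the probability that some component is missed.

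Next I would make ``overlap'' quantitative. For each component I would introduce the probability that a single draw $\bm{x}\sim p_{data}$ can be attributed to $\phi_k$, i.e.\ the chance that $\bm{x}$ lands in a region where $\phi_k$ carries non-negligible mass; call this $q_k$. The key structural fact is that when two sub-distributions overlap, a draw falling in the shared region can be attributed to \emph{both} components at once, so each $q_k$ grows with the amount of overlap, and in the disjoint limit $q_k$ collapses to the prior weight $\alpha_k$. I would formalize overlap through a pairwise overlapping coefficient such as $\int \min(\phi_i(\bm{x}),\phi_j(\bm{x}))\,d\bm{x}$ and verify that each $q_k$ is monotonically increasing in these coefficients.

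Then the probability that component $k$ is missed is $(1-q_k)^N$, and by a union bound / inclusion--exclusion the probability of nonuniform sampling is controlled from above by $\sum_{k=1}^{K}(1-q_k)^N$ and from below by $\max_k(1-q_k)^N$. Since each $(1-q_k)^N$ is decreasing in $q_k$ and each $q_k$ is increasing in the overlap, both bounds swell as the overlaps shrink. I would conclude that the probability of nonuniform sampling is larger precisely when the overlaps between sub-distributions are smaller, which is the claim.

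The main obstacle I anticipate is pinning down the definition of ``overlap'' so that the monotonicity of $q_k$ in the overlap holds rigorously rather than merely heuristically: the attribution region for each $\phi_k$ and the pairwise overlap coefficients must be defined consistently, and one must ensure that enlarging overlap only adds attributable mass to each component without removing it elsewhere. A secondary subtlety is the ``combination of several sub-distributions'' clause in Definition~1, which means I should track coverage of the \emph{set} of represented components rather than of individual points; the dependence introduced by shared overlap regions, where a single point can cover several components simultaneously, is where the inclusion--exclusion bookkeeping becomes delicate.
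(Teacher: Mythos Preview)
Your plan is sound and, modulo the two subtleties you already flag, would yield a correct heuristic proof. It is, however, a genuinely different decomposition from the paper's. The paper does not introduce per-component attribution probabilities $q_k$ or a union/inclusion--exclusion bound over missed components. Instead it reduces without loss of generality to $K=2$, fixes a single ``overlap region'' in sample space, and argues directly that the probability that \emph{every} batch sample avoids this region is $\prod_{i=1}^{b}\bigl[1-P(\bm{x}_i\in\text{overlap}\mid\bm{x}_i\sim p_{data})\bigr]$; as the overlap shrinks this product tends to $1$, so nonuniform sampling becomes more likely. In other words, the paper conditions on \emph{where the samples fall} (overlap vs.\ not), while you condition on \emph{which component is missed}. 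Your route is more faithful to Definition~1 (it treats ``at least one $\phi_k$ unrepresented'' head-on and works for general $K$ without an ad hoc reduction), at the price of needing the monotonicity of each $q_k$ in the pairwise overlap coefficients and the inclusion--exclusion bookkeeping you mention. The paper's route is shorter and avoids those technicalities, but its key displayed product is really an upper bound on the nonuniform-sampling probability rather than the probability itself (samples that all avoid the overlap can still cover both components), so the argument is looser than yours at exactly the step where yours is careful.
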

\begin{proof}15
	In $p_{data}$, when there is an overlap between several sub-distributions, a data sample can be sampled from their combination. Without loss of generality, assume that $p_{data}$ has only two sub-distributions that contain an overlap. Use $\bm{x}\in overlap$ to represent $\bm{x}$ is sampled from the combination of the two sub-distributions, and then the probability can be marked as $P(\bm{x}\in overlap|\bm{x}\in p_{data})$.
	
	If nonuniform sampling occurs, each data in a real data batch should be sampled from one sub-distribution according to the definition. The probability that all samples are from one sub-distribution is
	\begin{equation}
		\begin{aligned}
			&\prod_{i=0}^{b} P(\bm{x}_i\notin overlap|\bm{x}_i\in p_{data})\\
			= &\prod_{i=0}^{b}[1 - P(\bm{x}_i\in overlap|\bm{x}_i\in p_{data})]
		\end{aligned}
	\end{equation}
	where $\bm{x}_i$ is the $i$-th sample in a batch and $b$ is the batch size. If the overlaps between sub-distributions are smaller, we will have $P(\bm{x}_i\in overlap|\bm{x}_i\in p_{data})\to 0$, which indicates $P(\bm{x}_i\notin overlap|\bm{x}_i\in p_{data})\to 1$. Therefore, the probability that all samples are from one sub-distribution is larger, and nonuniform sampling is more likely to occur. It is not difficult to generalize to the scenes with more sub-distributions.
\end{proof}

\begin{thm}\label{thm2}
	In the training process, nonuniform sampling is more likely to occur when one or several sub-distributions are dominating in the sampling of $p_{data}$ in the training process.
\end{thm}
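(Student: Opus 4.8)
The plan is to mirror the batch-level counting argument used in the proof of Theorem~\ref{thm1}, but now tracking \emph{which} sub-distributions supply the samples rather than whether a sample lands in an overlap. First I would make the notion of dominating precise: I say a proper subset $S\subset\{1,\dots,K\}$ of sub-distributions dominates the sampling if its aggregate weight $\alpha_S=\sum_{k\in S}\alpha_k$ is close to $1$, so that the complementary weight $\sum_{k\notin S}\alpha_k=1-\alpha_S$ is small. Under the mixture model in (\ref{eq4}), the probability that a single draw $\bm{x}_i$ comes from the dominating block is exactly $\alpha_S$, playing the same role as the per-sample probabilities in Theorem~\ref{thm1}.

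Next, reusing the same independence-across-the-batch reasoning as in Theorem~\ref{thm1}, I would compute the probability that every one of the $b$ samples is drawn from the dominating block:
\begin{equation*}
	\prod_{i=1}^{b} P(\bm{x}_i\in S \mid \bm{x}_i\in p_{data}) = \alpha_S^{\,b}.
\end{equation*}
When this event occurs, no sample comes from any sub-distribution outside $S$; since $S$ is a proper subset, at least one sub-distribution is missed, which is precisely nonuniform sampling by the definition. Hence the probability of nonuniform sampling is bounded below by $\alpha_S^{\,b}$. An equivalent phrasing tracks the dominated sub-distributions directly: the probability that a lightly weighted $\phi_k$ is absent from the batch is $(1-\alpha_k)^b$, and this tends to $1$ as $\alpha_k\to 0$.

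Finally, I would let the block dominate more strongly, i.e. $\alpha_S\to 1$. Then $\alpha_S^{\,b}\to 1$, so the lower bound on the probability of nonuniform sampling tends to its maximum, establishing that nonuniform sampling becomes more likely as one or several sub-distributions dominate. As in Theorem~\ref{thm1}, the argument is written for a single dominating block but extends to arbitrarily many sub-distributions without change.

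The step I expect to be the main obstacle is reconciling the clean ``a proper subset is missed'' event with the literal wording of the definition, which indexes the sampled sub-distributions as a contiguous range $\{\phi_m,\dots,\phi_n\}$ with $n-m<K-1$. I would handle this by relabeling the sub-distributions so that the dominating block occupies a contiguous range of indices, which is always possible and leaves every weight unchanged; then the event ``all samples from $S$'' falls exactly under the definition. A secondary subtlety is the treatment of overlaps between sub-distributions: strictly, the per-sample event $\bm{x}_i\in S$ should be interpreted as in Theorem~\ref{thm1}, but since overlaps only raise the chance of a shared draw, ignoring them still yields the stated monotone conclusion and keeps the argument parallel to the previous proof.
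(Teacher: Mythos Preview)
Your proposal is correct and follows essentially the same approach as the paper: both formalize ``dominating'' as the per-sample probability $P(\bm{x}_i\in\phi_s\mid\bm{x}_i\in p_{data})\to 1$ and then argue that the probability of the entire batch landing in the dominating block tends to $1$, which by the definition forces nonuniform sampling. Your version is more careful---you write the batch event as the product $\alpha_S^{\,b}$ (the paper displays a sum, almost certainly a slip, since Corollary~\ref{cor1} reverts to a product) and you explicitly reconcile the proper-subset event with the contiguous-index wording of the definition via relabeling---but the underlying argument is identical.
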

\begin{proof}
	
	Use $\bm{x}\in p$ to represent $\bm{x}$ is sampled from distribution $p$. Let the sub-distributions $\phi_{s}=\{\phi_m,\phi_{m+1},...,\phi_{n}\}$ to be dominating in $p_{data}$ in the sampling of training, where $1\le m\le n\le K$ and $n-m<K-1$. Then it is equivalent to the situation that $P(\bm{x}_i\in \phi_{s}|\bm{x}_i\in p_{data})\to 1$. Then in a data batch, we will have
	\begin{equation}
		\sum_{i=1}^{b}P(\bm{x}_i\in \phi_{s}|\bm{x}_i\in p_{data})\to 1.
	\end{equation}
	According to the definition, nonuniform sampling is more likely to occur. 
\end{proof}

\newtheorem{corollary}{Corollary}[]
\begin{corollary} \label{cor1}
	In the training process, a large batch size can alleviate the nonuniform sampling problem.
\end{corollary}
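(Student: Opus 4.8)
The plan is to build directly on the product-form probability expressions derived in the proofs of Theorem~\ref{thm1} and Theorem~\ref{thm2}, and to show that each additional sample in the batch multiplies the probability of nonuniform sampling by a factor strictly less than one, so that enlarging $b$ drives this probability down.

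First I would recall from the Definition that nonuniform sampling occurs precisely when every sample in the batch is confined to a proper subset $\{\phi_m,\ldots,\phi_n\}$ of the $K$ sub-distributions, so that at least one sub-distribution $\phi_k$ is missed by the entire batch. Equivalently, the event of nonuniform sampling is contained in the union over $k$ of the events ``$\phi_k$ is missed by all $b$ samples.'' Using the independence of the samples within a batch, I would then write the probability that a fixed $\phi_k$ is missed by the whole batch as
\begin{equation}
	\prod_{i=1}^{b}\big[1 - P(\bm{x}_i\in\phi_k\,|\,\bm{x}_i\in p_{data})\big] = (1-\alpha_k)^{b},
\end{equation}
which mirrors the product already appearing in the proof of Theorem~\ref{thm1}. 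Since $\alpha_k\in(0,1)$, each factor $1-\alpha_k$ lies strictly in $(0,1)$, so this quantity is strictly decreasing in $b$ and tends to $0$ as $b\to\infty$.

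Finally, a union bound gives that the probability of nonuniform sampling is at most $\sum_{k=1}^{K}(1-\alpha_k)^{b}$, a finite sum of terms each decreasing monotonically to $0$. Hence enlarging the batch size $b$ monotonically reduces this upper bound, which establishes that a large batch size alleviates the nonuniform sampling problem. The conclusion can also be read off the opposite direction: uniform (proper) sampling requires the batch to cover enough sub-distributions, and the probability of achieving this coverage grows with $b$.

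The main obstacle will be the treatment of overlaps between sub-distributions: when $\phi_k$ shares mass with its neighbors, ``missing $\phi_k$'' is not a perfectly clean per-sample event, so the single-sample miss probability need not equal $1-\alpha_k$ exactly. I would handle this by bounding the per-sample miss probability by a constant $c_k<1$ and noting that the batch-level miss probability $c_k^{b}$ still vanishes monotonically, so the monotone-decrease conclusion is unaffected; the qualitative statement of the corollary only requires each per-sample factor to be bounded away from $1$, which holds whenever every sub-distribution carries positive weight.
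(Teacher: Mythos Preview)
Your proposal is correct and rests on the same mechanism as the paper: the probability that a batch stays inside a proper sub-collection of the $\phi_k$'s factors as a product of $b$ terms each strictly in $(0,1)$, so it decreases in $b$. The paper's proof is terser and slightly differently organized: it simply lifts the product $\prod_{i=1}^{b}P(\bm{x}_i\in\phi_s\mid\bm{x}_i\in p_{data})$ from the proof of Theorem~\ref{thm2} for a fixed subset $\phi_s$ and observes that each factor lies in $(0,1)$, without any union bound or discussion of overlaps. You instead decompose over which single $\phi_k$ is missed, write $(1-\alpha_k)^b$, and apply a union bound $\sum_{k}(1-\alpha_k)^b$; this buys you an explicit upper bound on the \emph{total} probability of nonuniform sampling (and its convergence to $0$), whereas the paper only argues monotonicity for each fixed $\phi_s$. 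Your treatment of the overlap issue is also more careful than anything in the paper. The approaches are close cousins, with yours being the more rigorous version.
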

\begin{proof}  
	According to the proof of Theorem~\ref{thm2}, the probability 15that all samples are from $\phi_s$ is
	\begin{equation}
		\prod_{i=1}^{b}P(\bm{x}_i\in\phi_s|\bm{x}_i\in p_{data})
	\end{equation}
	Because the range of $P(\bm{x}_i\in\phi_s|\bm{x}_i\in p_{data})$ is $(0,1)$, the probability of nonuniform sampling reduces with a larger $b$.
\end{proof}

At the points where nonuniform sampling occurs, $C(G)$ will be more likely to reach the minimum when $p_g\neq p_{data}$. Therefore, there will be more risks of mode collapse. We consider that most distributions contain linearly combined parts, so nonuniform sampling is one reason for mode collapse, and we will try to alleviate this problem.


\section{Method}
\label{sec:method}
\subsection{Motivation}
As mentioned above, some samples can make $C(G)$ reach or be close to the minimum value when $p_g\neq p_{data}$. They are like pits on a flat surface, and we will try to fix them to combat mode collapse.

In the theoretical design of GAN frameworks, the virtual training criterion for the generator is a distance that reaches the minimum when $p_{g}=p_{data}$. However, when nonuniform sampling occurs, the network's vision is limited to $p_s$. No matter how accurate the method we use, only $p_{g}=p_{s}$ can be achieved because $p_{data}$ is unknown in this iteration. To alleviate this problem, trying to incorporate information of $p_{data}$ into each training iteration should be effective. Based on this idea, we propose two distribution fitting methods aiming at different conditions.

\subsection{Global Distribution Fitting}
A direct method is to estimate the overall real distribution, and require the generated distribution to match it. However, the distributions are usually complicated, whose probability density functions are difficult to represent. Meanwhile, estimating the overall generated distribution in training will also be time-consuming. Thus, we consider only involving some critical information instead of the overall distribution. For a distribution, the mean and standard deviation are critical parameters. Global distribution fitting (GDF) is a method of adding a penalty term to constrain the means and standard deviations. First, we calculate the mean and standard deviation of each point of the overall real data before training, which are represented as $\bm{\mu}_r=\{\mu_{r1},\mu_{r2},...,\mu_{rn}\}$ and $\bm{\sigma}_r=\{\sigma_{r1},\sigma_{r2},...,\sigma_{rn}\}$ for $n$-tuple data. Second, in the training process, the generator will generate a mini-batch data whose mean and standard deviation are $\bm{\mu}_g=\{\mu_{g1},\mu_{g2},...,\mu_{gn}\}$ and $\bm{\sigma}_g=\{\sigma_{g1},\sigma_{g2},...,\sigma_{gn}\}$. We use $\bm{\mu}_g$ and $\bm{\sigma}_g$ to match the mean and standard deviation of $p_g$. Then the penalty term is defined as
\begin{equation}\label{eq12}
	\mathbb{L}(G)=\frac{1}{n}[||\bm{\mu}_g-\bm{\mu}_r||_1+||\bm{\sigma}_g-\bm{\sigma}_r||_1]
\end{equation}
It is expected to be minimized by the generator, which is independent of the adversarial training. The total training objective is
\begin{equation}\label{eq10}
	\min_G[\max_DV(D,G)+\lambda\mathbb{L}(G)]
\end{equation}
$\lambda$ is the weight of the penalty term. We set it to $1$ in experiments because its value will not have significant impacts on the properties of GDF. Then we will prove that GDF will not change the global minimum of the GAN objective and that it can suppress the nonuniform sampling issue. Two lemmas are introduced at first.

\begin{lemma}\label{lemma41}
	For a function $f(x)=f_1(x)+f_2(x)$, $f(x)$ must be bounded below if $f_1(x)$ and $f_2(x)$ are both bounded below.
\end{lemma}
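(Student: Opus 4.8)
The plan is to work directly from the definition of boundedness below and to exhibit an explicit lower bound for $f$. First I would recall that a function $g$ being bounded below means there exists a finite real constant $B$ such that $g(x)\ge B$ for every $x$ in the domain. Applying this to the two hypotheses, I would fix constants $B_1$ and $B_2$ with $f_1(x)\ge B_1$ and $f_2(x)\ge B_2$ holding for all $x$ in the common domain.

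Next, since both inequalities hold at the same point $x$, I would add them pointwise. Addition on $\mathbb{R}$ is compatible with the ordering, so summing two inequalities pointing in the same direction is valid and yields $f_1(x)+f_2(x)\ge B_1+B_2$, that is, $f(x)\ge B_1+B_2$ for every $x$. Because $B_1$ and $B_2$ are finite constants, their sum $B_1+B_2$ is again a finite real number, so it serves as a legitimate lower bound for $f$, and the claim follows.

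There is essentially no hard step here: the statement is an immediate consequence of the order-compatibility of addition. The only point that deserves a moment's care is confirming that the two lower bounds are taken over the \emph{common} domain of $f_1$ and $f_2$, so that the pointwise addition is meaningful, and that finiteness of the bounds is preserved under addition; both are immediate. I would therefore present the argument in a single short line, emphasizing the explicit witness $B_1+B_2$ rather than any nontrivial reasoning. The value of this lemma lies not in its difficulty but in its later use: it will let me conclude that the combined objective in \eqref{eq13}, being a sum of the (bounded-below) virtual criterion $C(G)$ and the nonnegative penalty $\lambda\mathbb{L}(G)$, remains bounded below, which is what makes the global-minimum analysis of GDF well posed.
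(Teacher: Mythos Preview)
Your proposal is correct and follows essentially the same approach as the paper: fix lower bounds (the paper calls them $M$ and $N$, you call them $B_1$ and $B_2$), add the inequalities pointwise, and observe that $f(x)\ge M+N$. The only difference is presentation length; the mathematical content is identical.
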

\begin{proof}
	Assume $f_1(x)\geq M$ and $f_2(x)\geq N$, then there must be $f(x)\geq M+N$. Therefore, $f(x)$ must be bounded below.
\end{proof}
\begin{lemma}\label{lemma42}
	For a function $f(x)=f_1(x)+f_2(x)$ where $f_1(x)$ and $f_2(x)$ are both bounded below, if $f_1(x)$ and $f_2(x)$ can reach the minimum simultaneously, $f(x)$ will reach the minimum if and only if $f_1(x)$ and $f_2(x)$ reach the minimum simultaneously.
\end{lemma}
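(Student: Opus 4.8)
The plan is to prove the two implications of the biconditional separately, using the value $M+N$ as the pivot, where I write $M$ and $N$ for the (attained) minima of $f_1$ and $f_2$ respectively. By Lemma~\ref{lemma41} the sum $f$ is already known to be bounded below, so the whole argument reduces to identifying its infimum. The first thing I would establish is the key claim that the simultaneous-minimum hypothesis forces $\min f = M+N$. Indeed, on the one hand $f(x)=f_1(x)+f_2(x)\geq M+N$ for every $x$, since each summand is bounded below by its own minimum; on the other hand, the hypothesis supplies a point $x_0$ at which $f_1$ and $f_2$ attain their minima together, so $f(x_0)=M+N$ and the lower bound is achieved. This pins down both that $f$ reaches a minimum and that its minimum value is exactly $M+N$.

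For the easy direction I would assume $f_1$ and $f_2$ reach their minima simultaneously at some point $x^*$, so $f_1(x^*)=M$ and $f_2(x^*)=N$. Then $f(x^*)=M+N$, which by the claim above is the minimum of $f$, so $f$ attains its minimum at $x^*$. For the converse I would start from a minimizer $x^*$ of $f$; since $\min f=M+N$, this gives $f_1(x^*)+f_2(x^*)=M+N$. Combining this equality with the pointwise bounds $f_1(x^*)\geq M$ and $f_2(x^*)\geq N$ forces both to be equalities: if $f_1(x^*)>M$ held, then the sum would strictly exceed $M+N$, a contradiction, and symmetrically for $f_2$. Hence $f_1(x^*)=M$ and $f_2(x^*)=N$, i.e. the two functions reach their minima simultaneously at $x^*$.

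I expect the only subtle point to be the essential role of the simultaneity hypothesis. Without a common minimizer $x_0$ one could only conclude $\min f\geq M+N$, possibly with strict inequality, and then $f$ could attain its minimum at a point where neither summand is individually minimal, so the biconditional would fail. Making explicit that the hypothesis is precisely what upgrades the generic bound $\inf f\geq M+N$ to the equality $\min f=M+N$ is therefore the crux of the argument; once that is in hand, both directions follow immediately from the elementary pointwise lower bounds.
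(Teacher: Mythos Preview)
Your proposal is correct and follows essentially the same route as the paper: both arguments pin down $\min f = M+N$ from the simultaneity hypothesis and then derive a contradiction in the converse direction from the pointwise bounds $f_1\geq M$, $f_2\geq N$. The only cosmetic difference is that the paper phrases the contradiction by solving $f_2(x)=M+N-f_1(x)<N$, whereas you argue directly that the sum would exceed $M+N$; these are the same step.
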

\begin{proof}
	Assume $f_1(x)\geq M$ and $f_2(x)\geq N$, then there must be $f(x)\ge M+N$. If there exists $x$ which make $f_1(x)=M$ and $f_2(x)=N$, we will have $f(x)=M+N$, which is the minimum. Therefore, $f(x)$ will reach the minimum if $f_1(x)$ and $f_2(x)$ reach the minimum simultaneously.
	
	Assume there exists $x$ which makes $f(x)=M+N$ when $f_1(x)$ and $f_2(x)$ do not reach the minimum simultaneously. Then there must be $f_1(x) > M$ or $f_2(x) > N$. If $f_1(x) > M$, we have 
	\begin{equation}
		\begin{aligned}
			f_2(x) &= M + N - f_1(x)\\
			&=N+[M-f_1(x)] \\
			&<N
		\end{aligned}
	\end{equation}
	which is contrary with $f_2(x)\geq N$. If $f_2(x) > N$, similarly we have $f_1(x)< M$ which is contrary with $f_1(x)\geq M$. Therefore, $f(x)$ will reach the minimum only if $f_1(x)$ and $f_2(x)$ reach the minimum simultaneously.
\end{proof}

Then we will prove the effectiveness of GDF. Because $C(G)=max_DV(D,G)$, for simplicity, let $C^*(G)=C(G)+\lambda\mathbb{L}(G)$.
\begin{thm}\label{thm41}
	If and only if $p_g=p_{data}$, $C^*(G)$ can reach the global minimum.
\end{thm}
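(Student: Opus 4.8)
The plan is to invoke Lemma~\ref{lemma42} with the decomposition $f_1 = C(G)$ and $f_2 = \lambda\mathbb{L}(G)$, so that $C^*(G) = f_1 + f_2$. First I would record the lower bounds on the two terms. From the original GAN analysis, $C(G)$ equals the Jensen--Shannon divergence up to an additive constant and therefore satisfies $C(G) \geq -\log(4)$, with equality attained exactly when $p_g = p_{data}$. Since $\lambda > 0$ and $\mathbb{L}(G)$ is a normalized sum of $\ell_1$ norms, I have $\lambda\mathbb{L}(G) \geq 0$, with equality attained exactly when $\bm{\mu}_g = \bm{\mu}_r$ and $\bm{\sigma}_g = \bm{\sigma}_r$. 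Lemma~\ref{lemma41} then guarantees that $C^*(G)$ is bounded below, in fact by $-\log(4)$.

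The central step is to check the hypothesis of Lemma~\ref{lemma42}, namely that $C(G)$ and $\lambda\mathbb{L}(G)$ can attain their minima simultaneously. I would verify this by exhibiting the common minimizer $p_g = p_{data}$: when the generated distribution equals the real one, $C(G) = -\log(4)$ by the original result, and since $\bm{\mu}_g, \bm{\sigma}_g$ are the mean and standard deviation of $p_g$ while identical distributions share identical moments, we obtain $\bm{\mu}_g = \bm{\mu}_r$ and $\bm{\sigma}_g = \bm{\sigma}_r$, hence $\mathbb{L}(G) = 0$. Both terms thus reach their respective minima at the same point, so the precondition of Lemma~\ref{lemma42} is met.

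With this established, Lemma~\ref{lemma42} gives directly that $C^*(G)$ attains its global minimum $-\log(4) + 0 = -\log(4)$ if and only if $C(G) = -\log(4)$ and $\mathbb{L}(G) = 0$ hold together. To finish the equivalence with $p_g = p_{data}$, the \emph{if} direction is exactly the common-minimizer computation above, while for \emph{only if} the condition $C(G) = -\log(4)$ already forces $p_g = p_{data}$ by the original GAN theorem. I would deliberately route the \emph{only if} direction through $C(G)$ rather than through the penalty.

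I expect the main obstacle to be conceptual rather than computational. One must be careful that $\mathbb{L}(G) = 0$ on its own does not pin down $p_g = p_{data}$, since matching only the first two moments is far weaker than matching the full distribution. The equivalence therefore rests entirely on the Jensen--Shannon term, and the sole role of the penalty is to leave both the location and the value of the global minimum unchanged; this is precisely what the simultaneity hypothesis of Lemma~\ref{lemma42} encodes. Keeping that dependence straight is the crux, after which the conclusion is an immediate application of the two preliminary lemmas.
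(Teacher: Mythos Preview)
Your proposal is correct and follows essentially the same route as the paper: decompose $C^*(G)$ into $C(G)$ and $\lambda\mathbb{L}(G)$, invoke Lemma~\ref{lemma41} for boundedness below, verify the simultaneity hypothesis of Lemma~\ref{lemma42} at $p_g=p_{data}$, and then route the \emph{only if} direction through the equality condition for $C(G)$. Your explicit remark that $\mathbb{L}(G)=0$ alone cannot force $p_g=p_{data}$ is a helpful clarification that the paper leaves implicit, but otherwise the arguments coincide.
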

\begin{proof}
	We know that $C(G)$ is bounded below according to~\cite{gan}. In the calculation of $\mathbb{L}(G)$, the 1-norm is non-negative. According to Lemma~\ref{lemma41}, $\mathbb{L}(G)$ is bounded below. Then for $C^*(G)$, $C(G)$ and $\mathbb{L}(G)$ are both bounded below. According to Lemma~\ref{lemma41}, $C^*(G)$ is also bounded below.
	
	The minimum of $C(G)$ is reached when $p_g=p_{data}$ according to~\cite{gan}. For $\mathbb{L}(G)$, 1-norm will reach the minimum $0$ when all the elements in the vector is zero. And because it is possible that $\bm{\mu}_g=\bm{\mu}_r$ and $\bm{\sigma}_g=\bm{\sigma}_r$ simultaneously, the minimum of $\mathbb{L}(G)$ is $0$ according to Lemma~\ref{lemma42}. Then because it is possible that $p_g=p_{data}$, $\bm{\mu}_g=\bm{\mu}_r$ and $\bm{\sigma}_g=\bm{\sigma}_r$ exist simultaneously, $C^*(G)$ can reach the minimum according to Lemma~\ref{lemma42}.
	
	According to Lemma~\ref{lemma42}, $C^*(G)$ will reach the minimum only if $C(G)$ and $\mathbb{L}(G)$ reach the minimum simultaneously. According to~\cite{gan}, $C(G)$ will reach the minimum only if $p_g=p_{data}$. Therefore, $C^*(G)$ will reach the minimum only if $p_g=p_{data}$.
\end{proof}

According to Theorem~\ref{thm41}, the global minimum of the GAN training objective is not changed with GDF. For other GAN frameworks with different objectives, the theorem is also effective because they still reach the minimum if and only if $p_g=p_{data}$. Thus, there are no negative effects for GANs with our method. 

\begin{thm}\label{thm42}
	If nonuniform sampling occurs, $C^*(G)$ can reach the minimum only if the means of all sub-distributions of $p_{data}$ can be linearly expressed by basic solutions
	\begin{equation}
		(\xi_1,\xi_2,\xi_3,...,\xi_n)=
		\begin{pmatrix}
			A \\
			E
		\end{pmatrix}
	\end{equation}
	where $E$ is an $n-1$ dimension identity matrix. $A$ is a row vector
	
	\begin{equation}
		\left(
		-\frac{\alpha_2}{\alpha_1},-\frac{\alpha_3}{\alpha_1},...,-\frac{\alpha_m-\beta_m}{\alpha_1},...,-\frac{\alpha_n-\beta_n}{\alpha_1},...,-\frac{\alpha_K}{\alpha_1}
		\right)
	\end{equation}
	where $\alpha_k$ is the weight of the $k$-th sub-distribution of $p_{data}$, and $\beta_k$ is the weight of the $k$-th sub-distribution of $p_s$.
	
\end{thm}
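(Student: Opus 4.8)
The plan is to reduce the statement to a condition on when the two summands of $C^*(G)=C(G)+\lambda\mathbb{L}(G)$ can be minimized together, and then translate that condition into linear algebra. By Lemma~\ref{lemma42}, since $C(G)$ and $\mathbb{L}(G)$ are both bounded below (as established in the proof of Theorem~\ref{thm41}), $C^*(G)$ attains its minimum only if $C(G)$ and $\mathbb{L}(G)$ attain their minima simultaneously. In a training iteration where nonuniform sampling occurs, the batch represents $p_s=\sum_{k=m}^{n}\beta_k\phi_k$ rather than $p_{data}$, so by the argument of Lemma~\ref{lemma1} the virtual criterion $C(G)$ bottoms out at $-log(4)$ precisely at the collapsed solution $p_g=p_s$. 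Hence for $C^*(G)$ to reach its minimum at such a point we must additionally have $\mathbb{L}(G)=0$, i.e. $\bm{\mu}_g=\bm{\mu}_r$ and $\bm{\sigma}_g=\bm{\sigma}_r$. I would then work only with the mean constraint $\bm{\mu}_g=\bm{\mu}_r$, which is what produces the stated linear system.

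Next I would compute both means in terms of the component sub-distributions. Writing $\bm{m}_k$ for the mean of $\phi_k$, the mean of $p_{data}=\sum_{k=1}^{K}\alpha_k\phi_k$ is $\bm{\mu}_r=\sum_{k=1}^{K}\alpha_k\bm{m}_k$, while under $p_g=p_s$ the generated mean is $\bm{\mu}_g=\sum_{k=m}^{n}\beta_k\bm{m}_k$. Setting $\bm{\mu}_g=\bm{\mu}_r$ and collecting terms gives the homogeneous relation
\[
\sum_{k\notin[m,n]}\alpha_k\bm{m}_k+\sum_{k=m}^{n}(\alpha_k-\beta_k)\bm{m}_k=0,
\]
that is, $\sum_{k=1}^{K}c_k\bm{m}_k=0$ with $c_k=\alpha_k$ for $k\notin[m,n]$ and $c_k=\alpha_k-\beta_k$ for $k\in[m,n]$. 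This is a single homogeneous linear equation that the sub-distribution means must satisfy, applied coordinate by coordinate when the data are multidimensional.

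Finally I would read off the solution set of this equation. Treating $\bm{m}_1,\dots,\bm{m}_K$ as unknowns and solving for the leading one via $\bm{m}_1=-\frac{1}{\alpha_1}\sum_{k\ge 2}c_k\bm{m}_k$ (using $c_1=\alpha_1$, i.e. the representative case $1\notin[m,n]$ that matches the displayed $\alpha_1$ denominators), the general solution is spanned by the $K-1$ basic solutions whose stacked form is exactly $\binom{A}{E}$, with $A=(-c_2/\alpha_1,\dots,-c_K/\alpha_1)$ and $E$ the $(K-1)$-dimensional identity. Substituting the values of $c_k$ recovers the displayed entries of $A$, including the $-(\alpha_k-\beta_k)/\alpha_1$ terms for the sampled indices $k\in[m,n]$. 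Thus the means can reach the required configuration only if they lie in this span, which is the claim.

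The main obstacle, and the step I would treat most carefully, is the reduction in the first paragraph: justifying that under nonuniform sampling the only candidate minimizer of $C(G)$ is $p_g=p_s$, so that the constraint $\mathbb{L}(G)=0$ reduces to matching the mean of $p_s$ to that of $p_{data}$. I would also be explicit that the argument uses only the mean half of $\mathbb{L}(G)$ — the $\bm{\sigma}$ half imposes further constraints that can only make the collapsed minimum harder to reach — and that, generically, the sub-distribution means do not satisfy this measure-zero linear relation, which is precisely why GDF suppresses the collapsed minimum. The remaining linear-algebra bookkeeping (signs, and which indices carry $\alpha_k-\beta_k$ versus $\alpha_k$) is routine.
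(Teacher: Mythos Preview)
Your proposal is correct and follows essentially the same route as the paper: invoke Lemma~\ref{lemma42} to force $C(G)$ and $\mathbb{L}(G)$ to be simultaneously minimal, identify $p_g=p_s$ as the minimizer of $C(G)$ under nonuniform sampling so that $\bm{\mu}_g=\bm{\mu}_s$, combine with $\bm{\mu}_g=\bm{\mu}_r$ to obtain $\bm{\mu}_s=\bm{\mu}_r$, and then read off the homogeneous linear relation among the sub-distribution means and its basic solution set. Your additional remarks (that only the mean half of $\mathbb{L}(G)$ is used, that the $\bm{\sigma}$ constraint only strengthens the conclusion, and that the identity block has dimension $K-1$) are welcome clarifications but do not change the argument.
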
 
\begin{proof}
	Assuming that $C^*(G)$ reaches the minimum. First, $C(G)$ must reach the minimum according to Lemma~\ref{lemma42}. Then there must be $p_g=p_s$ because we use $p_s$ to represent $p_{data}$ in training. Therefore, we have $\bm{\mu}_g=\bm{\mu}_s$. Second, $||\bm{\mu}_g-\bm{\mu}_r||_1$ must be zero according to Lemma~\ref{lemma42}, which indicates $\bm{\mu}_g=\bm{\mu}_r$. As a result, we have $\bm{\mu}_s=\bm{\mu}_r$.
	
	From Equations~\ref{eq4} and~\ref{eq5}, we can calculate the means of $p_{data}$ and $p_{s}$
	\begin{equation}\label{eq14}
		\bm{\mu}_r = \sum_{k=1}^{K}\alpha_k\mu_{k}
	\end{equation}
	\begin{equation}\label{eq15}
		\bm{\mu}_s = \sum_{k=m}^{n}\beta_k\mu_{k}
	\end{equation}
	where $\mu_{k}$ is the mean of the $k$-th sub-distribution. With $\bm{\mu}_s=\bm{\mu}_r$, we have 
	\begin{equation}
		\begin{aligned}
			&\sum_{k=1}^{K}\alpha_k\mu_{k} = \sum_{k=m}^{n}\beta_k\mu_{k}\\
			\iff&\alpha_1\mu_1+\alpha_2\mu_2+...+(\alpha_m-\beta_m)\mu_m+...\\
			&+(\alpha_n-\beta_n)\mu_n+...+\alpha_K\mu_K = 0
		\end{aligned}
	\end{equation}
	whose basic solution set is $(A^T,E)^T$.
\end{proof}

It is not common that the means of the sub-distributions are in a certain proportional relationship, especially when $\beta_k$ is not static and influenced by the sampling process. Therefore, according to Theorem~\ref{thm42}, the minimum value is harder to be reached when $p_g\neq p_{data}$. Even if the sub-distributions are not linearly combined, the means should still be in a certain proportional relationship, and only the calculations in Eqs. \ref{eq14} and \ref{eq15} need to be changed. As a result, $p_g$ will be more likely to converge to $p_{data}$, and mode collapse will be suppressed.

\subsection{Local Distribution Fitting}
In some circumstances, the real distribution is unknown or hard to calculate. For instance, if the real data is a black box and we can only sample from it, GDF will not work. It is because calculating the distribution parameters before training will be difficult, as it is almost impossible to define a general criterion of minimum training sample size~\cite{salazar2023proxy} and we also cannot define the number of samples required for estimation. To cope with this situation, we propose a local distribution fitting (LDF) method. It utilizes the mean $\bm{\mu}_s$ and standard deviation $\bm{\sigma}_s$ of all sampled data in the previous training iterations to replace the real ones. In the training process, the penalty term is
\begin{equation}
	\mathbb{L}_s(G)=\frac{1}{n}[||\bm{\mu}_g-\bm{\mu}_s||_1+||\bm{\sigma}_g-\bm{\sigma}_s||_1]
\end{equation}
and the training objective is
\begin{equation}\label{eq18}
	\min_G[\max_DV(D,G)+\lambda_s\mathbb{L}_s(G)]
\end{equation}
$\lambda_s$ is the weight which is also set to $1$. When more and more data is sampled, $\bm{\mu}_s$ and $\bm{\sigma}_s$ will be closer to the real ones. Then according to Theorems~\ref{thm41} and~\ref{thm42}, $p_g$ will be more likely to converge to $p_{data}$, and the mode collapse problem will also be suppressed.

\section{Experiments}
\label{sec:experiments}
In this section, we conduct experiments on the proposed methods. Several datasets will be involved, and the evaluation metrics will be introduced in each section.

\subsection{Mixture of Gaussian Dataset}
To illustrate the impact of distribution fitting, we train a simple network on a 2D mixture of 8 Gaussians arranged in a circle. Both the training setup and the network architecture are the same as those proposed by unrolledGAN~\cite{unrolledgan}, and we utilize an implementation of it\footnote{https://github.com/andrewliao11/unrolled-gans} to conduct the experiment. The detailed network architecture is also provided in the Appendix. This experiment gives an intuitive result, which proves the effectiveness of GDF and LDF in suppressing mode collapse.

\begin{figure}[h]
	\centering
	\includegraphics[width=3.3in]{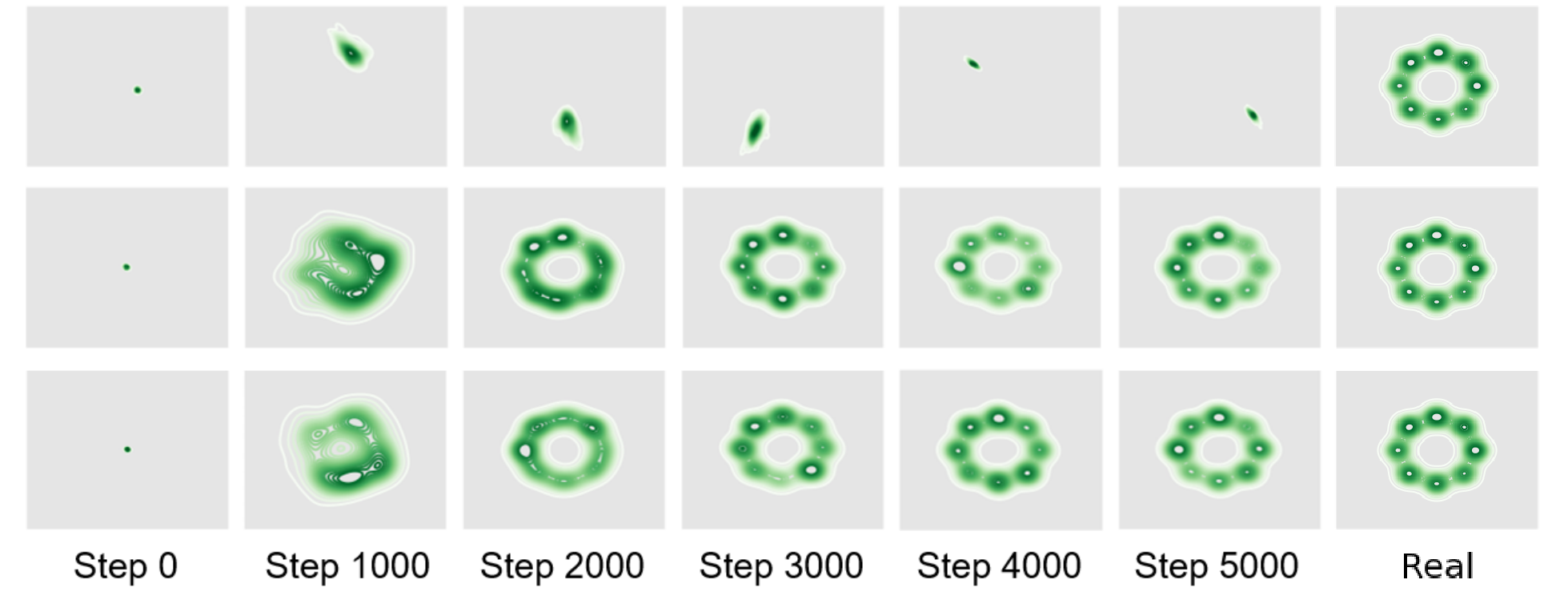}
	\caption{The dynamics of the model through time. The image rows from top to bottom represent the model without distribution fitting, the model with LDF, and the model with GDF, respectively.}
	\label{fig1}
\end{figure}

In Fig.~\ref{fig1}, we observe that the original model suffers from severe mode collapse. Using GDF or LDF will inhibit mode collapse, and the network can learn a complete distribution. However, the difference between GDF and LDF is marginal. It is because the batch size is 512 in the training process. The real data only consists of eight sub-distributions, so the distribution of a generated batch is already very close to the global one.

\begin{figure}[h]
	\centering
	\includegraphics[width=3.3in]{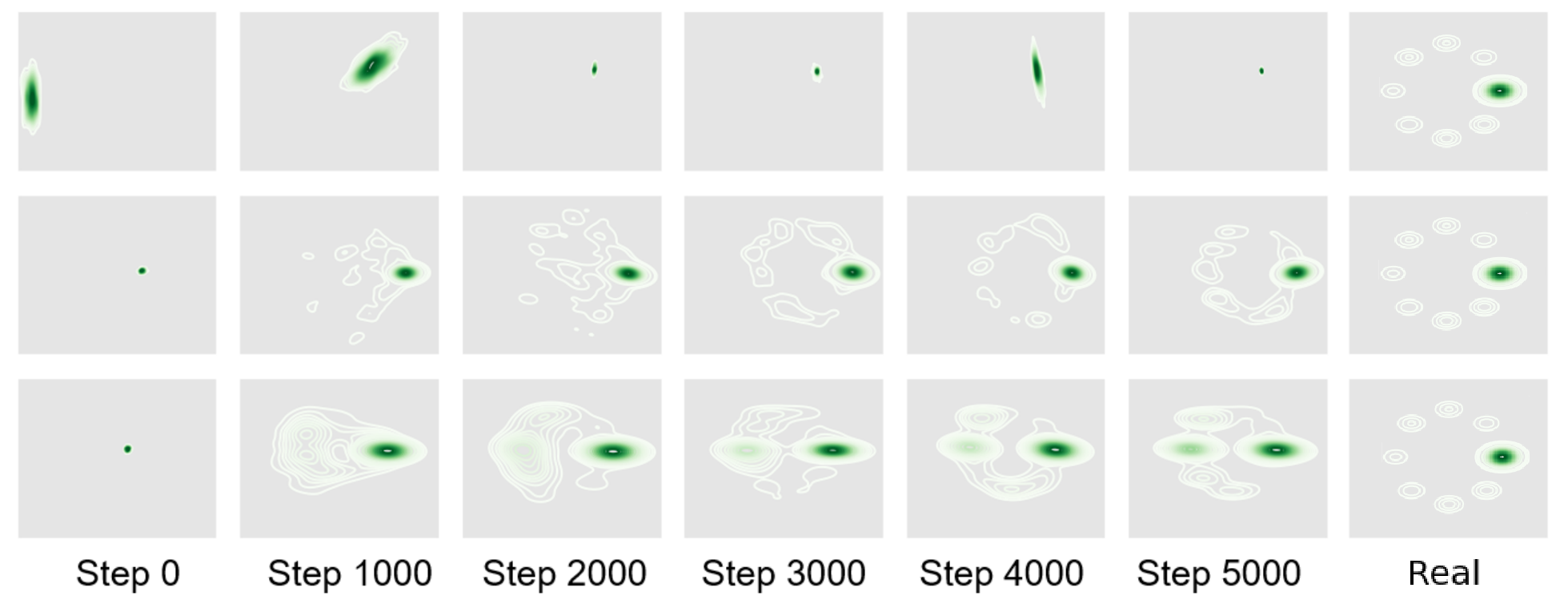}
	\caption{The dynamics of the model through time in an extreme case. The image rows from top to bottom represent the model without distribution fitting, the model with LDF, and the model with GDF, respectively.}
	\label{fig2}
\end{figure}

There is also an extreme case where one Gaussian is dominating with a large weight while the others account for only a small part. Although the distribution of the 8 Gaussians is nonuniform with the training setup, this case is still hard to reach. To demonstrate the effectiveness of GDF and LDF in this case, we manually set the weight of one Gaussian to $0.86$, while the other seven weights are all $0.02$. The training dynamics are illustrated in Fig.~\ref{fig2}. With GDF or LDF, the network can also generate samples from the other sub-distributions, indicating that mode collapse is also suppressed.

\subsection{Stacked MNIST Dataset}
The effectiveness of suppressing mode collapse is frequently evaluated through experiments conducted on the stacked MNIST dataset. According to~\cite{che2016mode}, the images in the stacked MNIST dataset are obtained by stacking three random single-channel images from MNIST~\cite{mnist}. Therefore, there are 1,000 modes that need to be generated. In practice, the stacked images are produced dynamically in the training process. As there are too many possible stacked images, the real mean and standard deviation cannot be calculated before training. Therefore, GDF is not available.

In order to demonstrate the effectiveness of both GDF and LDF, we first make a static stacked MNIST dataset containing 60,000 images for training, so the mean and standard deviation of the real distribution can be calculated. Considering the practice is different from previous methods, we do not involve other frameworks for comparisons in this experiment. The baseline has a DCGAN-like architecture which is the same as the one proposed by~\cite{pacgan}, which has been provided in the Appendix. Each network will generate 25,600 samples, and we compute the predicted class label of each image channel by a pre-trained MNIST classifier\footnote{https://github.com/fjxmlzn/PacGAN}. We report the number of modes for which at least one sample is generated. A higher mode number represents better results because it denotes that more modes are generated. We also report the Kullback-Leibler divergence (KL) between the generated samples and the real data over classes. A lower KL represents better results because it indicates that the generated distribution is more similar to the real one.

\begin{table}[h]
	\centering
	\renewcommand\arraystretch{1.15}
	\caption{Experiments on a static stacked MNIST dataset that contains 60,000 images. The results are averaged over 10 trials with standard error reported. Higher Modes and lower KL represent better results.}
	\begin{tabular}{c|c||r|r}
		\hline
		GDF&       LDF&          Modes& KL\\
		\hline
		&          & 876.8$\pm$3.69& 0.40$\pm$0.006\\
		\ding{52}&          & \textbf{984.5}$\pm$\textbf{1.35}& \textbf{0.20}$\pm$\textbf{0.006}\\
		& \ding{52}& 980.4$\pm$1.33& 0.28$\pm$0.008\\
		\hline
	\end{tabular}
	\label{tab1}
\end{table}

In TABLE~\ref{tab1}, the results illustrate two points. First, distribution fitting methods are effective in suppressing mode collapse. Learning a static dataset's distribution is easier, so our baseline also achieves good performance. However, with GDF and LDF, generated modes are still noticeably increased, and the KL is also lower. Second, the results with GDF are only slightly better than those with LDF, which indicates that LDF can match the real distribution. Therefore, LDF is also effective for combating mode collapse.

Second, we produce the stacked images in the training process to fairly compare with other methods. As mentioned, the real distribution is unknown in this case, so only LDF is utilized. 

\begin{table}[h]
	\centering
	\renewcommand\arraystretch{1.15}
	\caption{Experiments on stacked MNIST dataset. MD denotes Minibatch Discrimination. The last four results are averaged over 10 trials with standard error reported. Higher Modes and lower KL represent better results.}
	\begin{tabular}{l||r|r}
		\hline
		Method& Modes& KL\\
		\hline
		DCGAN~\cite{dcgan}&            99.0& 3.40\\
		ALI~\cite{ali}&            16.0& 5.40\\
		Unrolled GAN~\cite{unrolledgan}&            48.7& 4.32\\
		VEEGAN~\cite{veegan}&           150.0& 2.95\\
		MD~\cite{wgangp}&   24.5$\pm$7.67& 5.49$\pm$0.418\\
		PacDCGAN2~\cite{pacgan}& \textbf{1000.0}$\pm$\textbf{0.00}& \textbf{0.06}$\pm$\textbf{0.003}\\
		\hline
		Baseline&   24.1$\pm$4.10& 7.48$\pm$0.998\\
		Baseline w/ LDF&  970.0$\pm$2.28& 0.23$\pm$0.020\\
		
		\hline
	\end{tabular}
	
	\label{tab2}
\end{table}

In TABLE~\ref{tab2}, the first six rows are directly copied from~\cite{pacgan}. It is obvious that LDF achieves competitive performance. However, the performance does not reach the best, which is not as good as PacDCGAN2~\cite{pacgan}. We consider that it is because the batch size is not large enough. In LDF, the mean and standard deviation of generated data will be matched by those of a batch. Therefore, we consider that a large batch size will benefit LDF. To validate the hunch, we use different batch sizes to train the models. 

\begin{table}[h]
	\centering
	\renewcommand\arraystretch{1.15}
	\caption{Experiments on stacked MNIST dataset with different batch sizes. The results are averaged over 10 trials with standard error reported. Higher Modes and lower KL represent better results.}
	\begin{tabular}{c|l||r|r}
		\hline
		Batch&         \multirow{2}{*}{Method}& \multirow{2}{*}{Modes}& \multirow{2}{*}{KL}\\
		Size&                                 & & \\
		\hline
		\multirow{2}{*}{1x}&        Baseline&  24.1$\pm$4.10& 7.48$\pm$0.998\\
		& Baseline w/ LDF& 970.0$\pm$2.28& 0.23$\pm$0.020\\
		\hline
		\multirow{2}{*}{10x}&        Baseline& 157.0$\pm$71.50& 5.56$\pm$1.047\\
		& Baseline w/ LDF& 985.9$\pm$2.700& 0.20$\pm$0.026\\
		\hline
		\multirow{2}{*}{20x}&        Baseline& 160.6$\pm$18.27& 4.26$\pm$0.398\\
		& Baseline w/ LDF& 996.2$\pm$0.770& 0.13$\pm$0.011\\
		\hline
		\multirow{2}{*}{50x}&        Baseline& 184.0$\pm$18.27& 3.39$\pm$0.209\\
		& Baseline w/ LDF& \textbf{1000}$\pm$\textbf{0.000}& \textbf{0.03}$\pm$\textbf{0.002}\\
		\hline
	\end{tabular}
	\label{tab3}
\end{table}

\begin{table*}[t]
	\centering
	\renewcommand\arraystretch{1.1}
	\caption{Experiments on stacked MNIST dataset with different discriminators. The results are averaged over 10 trials with standard error reported. Higher Modes and lower KL represent better results.}
	\begin{tabular}{l||r|r|r|r}
		\hline
		\multirow{2}{*}{Method}& \multicolumn{2}{c|}{D is 1/4 size of G}& \multicolumn{2}{c}{D is 1/2 size of G}\\
		\cline{2-3}\cline{4-5}
		& Modes& KL& Modes& KL\\
		\hline
		Unrolled GAN~\cite{unrolledgan}, 0 step& 30.6$\pm$20.73& 5.99$\pm$0.42& 628.0$\pm$140.9& 2.58$\pm$0.75\\
		Unrolled GAN~\cite{unrolledgan}, 1 step& 65.4$\pm$34.75& 5.91$\pm$0.14& 523.6$\pm$55.77& 2.44$\pm$0.26\\
		Unrolled GAN~\cite{unrolledgan}, 5 steps& 236.4$\pm$63.30& 4.67$\pm$0.43& 732.0$\pm$44.98& 1.66$\pm$0.09\\
		Unrolled GAN~\cite{unrolledgan}, 10 steps& 327.2$\pm$74.67& 4.66$\pm$0.46& 817.4$\pm$37.91& 1.43$\pm$0.12\\
		\hline
		Baseline& 77.4$\pm$28.37& 8.66$\pm$1.34& 394.6$\pm$79.31& 2.40$\pm$0.64\\
		Baseline w/ LDF &\textbf{952.4}$\pm$\textbf{8.870}&\textbf{0.39}$\pm$\textbf{0.04}&
		\textbf{952.7}$\pm$\textbf{7.260}&\textbf{0.47}$\pm$\textbf{0.06}\\
		\hline
	\end{tabular}
	\label{tab5}
\end{table*}

In TABLE~\ref{tab3}, we see that with the increasing batch size, the performance will become better. It is proof of Corollary~\ref{cor1}. Nevertheless, the larger batch size is not able to completely solve the mode collapse problem as the mode number is still less than 200, even with a 50x batch size. We can catch all 1,000 modes with LDF. Thus, LDF shows effectiveness with a large batch size.

As we mentioned that the weight of the penalty term is set to 1, we also conduct experiments to prove that its value will not have critical impacts. For simplicity, the experiments are conducted on the static dataset with 60,000 images and GDF. As shown in TABLE~\ref{tab4}, we consider that there are only some fluctuations in training. Thus, we believe the value of $\lambda$ has no significant impact.

\begin{table}[h]
	\centering
	\renewcommand\arraystretch{1.15}
	\caption{Experiments on a static stacked MNIST dataset with different weights of the penalty term. The results are averaged over 10 trials with standard error reported. Higher Modes and lower KL represent better results.}
	\begin{tabular}{c||r|r}
		\hline
		$\lambda$ &          Modes& KL\\
		\hline
		~~0.5~~          & 981.7$\pm$1.81& 0.20$\pm$0.010\\
		~~1.0~~          & 984.5$\pm$1.35& 0.20$\pm$0.006\\
		~~2.0~~          & 983.2$\pm$1.04& 0.18$\pm$0.006\\
		\hline
	\end{tabular}
	
	\label{tab4}
\end{table}

To prove the stability of LDF, experiments similar with~\cite{unrolledgan} are conducted. We adjust the parameters of the discriminator to demonstrate the performance. In TABLE~\ref{tab5}, the first four rows are from~\cite{unrolledgan}. Obviously, our method achieves satisfactory performance, and it is still competitive with a weaker discriminator.

\begin{table*}[t]
	\centering
	\renewcommand\arraystretch{1.15}
	\caption{Experiments on CIFAR-10, STL-10 and Tiny-ImageNet. A higher IS and a lower FID represent better results. The symbol * represents the method designed for suppressing mode collapse. The symbol $\dagger$ represents the results for comparison obtained from a model trained by ourselves.}
	\begin{tabular}{l||c|c|c|c|c|c}
		\hline
		\multirow{2}{*}{Method}& \multicolumn{2}{c|}{CIFAR-10 ($32\times 32$)}& \multicolumn{2}{c|}{STL-10 ($32\times 32$)}& \multicolumn{2}{c}{Tiny-ImageNet ($64\times 64$)}\\
		\cline{2-7}
		& IS& FID& IS& FID& IS& FID\\
		\hline		   
		*MGGAN~\cite{mggan}&             6.608& 51.76&    -& -& -& -\\
		*MDGAN~\cite{mdgan}&                 -& 36.80&    -& -& -& -\\
		*D2GAN~\cite{d2gan}&   \textbf{7.150}$\pm$\textbf{0.070}&     -& 7.98& -& -& -\\
		*DRAGAN~\cite{dragan}&           6.900& 68.50&    -& -& -& -\\
		BEGAN~\cite{began}&              5.620& 71.40&    -& -& -& -\\
		WGAN~\cite{wgan}&      3.820$\pm$0.060&  55.90& $\dagger$3.764$\pm$0.042& $\dagger$77.13& $\dagger$3.419$\pm$0.044& $\dagger$78.79\\
		WGAN-GP~\cite{wgangp}& 6.680$\pm$0.060&  40.20& \textbf{8.420}$\pm$\textbf{0.130}& 55.10& $\dagger$5.557$\pm$0.109& $\dagger$66.45\\
		DCGAN~\cite{dcgan}&    $\dagger$6.474$\pm$0.045&  $\dagger$33.20& $\dagger$7.744$\pm$0.055& $\dagger$60.21& $\dagger$5.315$\pm$0.100& $\dagger$72.70\\
		GAN~\cite{gan}&        $\dagger$2.395$\pm$0.067&  $\dagger$89.98& $\dagger$2.400$\pm$0.044& $\dagger$93.31& $\dagger$2.537$\pm$0.099& $\dagger$90.01\\
		\hline
		*GAN w/ GDF (Ours)&    2.986$\pm$0.029&  85.78& 2.563$\pm$0.047& 82.24& 2.549$\pm$0.065& 88.44\\
		*GAN w/ LDF (Ours)&    2.904$\pm$0.030& 85.11& 2.498$\pm$0.015& 86.74& 2.537$\pm$0.087& 89.12\\
		*DCGAN w/ GDF (Ours)&  6.965$\pm$0.077& \textbf{30.03}& 7.991$\pm$0.095& \textbf{53.24}& \textbf{6.002}$\pm$\textbf{0.078}& \textbf{65.19}\\
		*DCGAN w/ LDF (Ours)&  6.798$\pm$0.081& 30.23& 7.826$\pm$0.079& 55.26& 5.875$\pm$0.089& 70.58\\
		\hline
	\end{tabular}
	\label{tab6}
\end{table*}

Finally, we give the convergence curves of the penalty term on the static dataset. From Eqs.~\ref{eq10} and~\ref{eq18}, GDF and LDF will not be involved in the adversarial training but only be minimized by the generator. Thus, the penalty terms are expected to be kept at small values. The curves in Fig.~\ref{fig_stacked} demonstrate that GDF and LDF are optimized as expected. Though LDF requires more iterations to estimate the parameters, it can finally be minimized similarly to GDF.


\begin{figure}[h]
	\centering
	\subfloat[]{
		\includegraphics[width=1.7in]{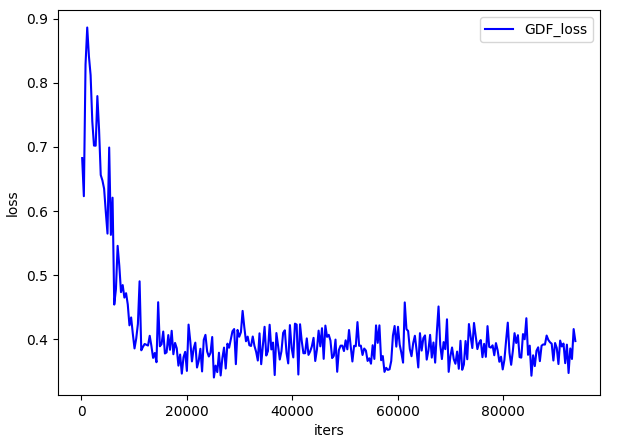}} 
	\subfloat[]{
		\includegraphics[width=1.7in]{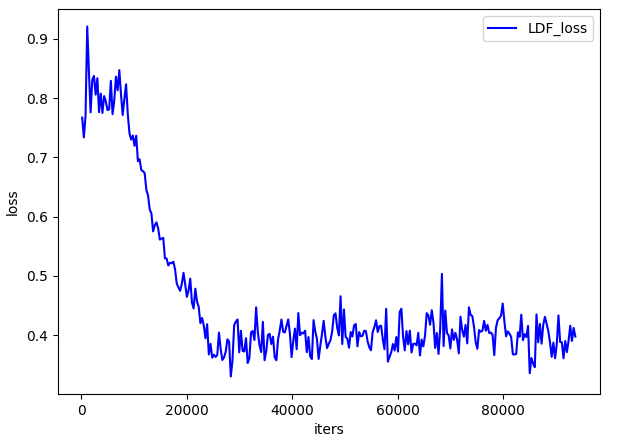}}
	\caption{The convergence curves of (a) GDF and (b) LDF. The horizontal coordinate axis represents the training iteration and the vertical one represents the values.}
	\label{fig_stacked}
\end{figure}

\subsection{CIFAR-10, STL-10 and Tiny-ImageNet Dataset}
Experiments on natural datasets are also necessary to demonstrate the effectiveness. We involve CIFAR-10~\cite{cifar10} and STL-10~\cite{stl10}, which are widely used, for our experiments. Additionally, considering our method is designed for combating mode collapse, we also involve Tiny-ImageNet~\cite{tinyimagenet}, which contains 200 classes, to show the performance of our method on the multi-class generation task. In the experiments, each network will generate 25,600 images for testing.

The Inception Score (IS)~\cite{is} and Fréchet Inception Distance (FID)~\cite{fid} are reported. IS can be calculated with
\begin{equation}
	IS(G)=exp(E_{x\sim p_g}D_{KL}(p(y|x)||p(y)))
\end{equation}
where $p_g$ is the distribution of generated data, $D_{KL}$ is the KL-divergence, $x$ is the input image and $y$ is the output of a pre-trained Inception V3 network~\cite{inceptionv3}. A higher IS represents better results. FID can be calculated with
\begin{equation}
	FID(y,g)=||\mu_{y}-\mu_{g}||^{2}_{2}+Tr(\Sigma_y+\Sigma_g-2(\Sigma_y\Sigma_g)^{\frac{1}{2}})
\end{equation}
where $y$ and $g$ are the outputs of a pre-trained Inception V3 network~\cite{inceptionv3} with the real and generated data as the inputs, respectively. $\mu$ and $\Sigma$ are the mean and covariance of the data distributions of $y$ and $g$. FID represents the distance between the two datasets. We will calculate the FID between the generated data and the testing data, and a lower FID represents better results. 

First, we evaluate our method with the original image resolutions. We also train some famous networks with a public implementation\footnote{https://github.com/eriklindernoren/PyTorch-GAN} for comparisons. The results are shown in TABLE~\ref{tab6}. We apply our GDF and LDF in GAN~\cite{gan} and DCGAN~\cite{dcgan} to demonstrate the effectiveness. We can observe that DCGAN with GDF achieves the best performance in most cases. Though the IS is not the best on CIFAR-10 and STL-10, the performance is still better than the original DCGAN. The comparisons prove that our method is effective. The convergence curves of GDF and LDF with DCGAN under different datasets can be found in Appendix.

Second, we try our method for higher image resolutions. As GAN~\cite{gan} and DCGAN~\cite{dcgan} are not designed for high-resolution generation, we involve StyleGAN2~\cite{stylegan2} to generate larger images. The results are shown in TABLE~\ref{tab6-1}.

\begin{table*}[t]
	\centering
	\renewcommand\arraystretch{1.15}
	\caption{Experiments on CIFAR-10, STL-10 and Tiny-ImageNet with higher resolutions. A higher IS and a lower FID represent better results.  }
	\begin{tabular}{l||c|c|c|c|c|c|c}
		\hline
		\multirow{2}{*}{Method}& \multirow{2}{*}{Size}& \multicolumn{2}{c|}{CIFAR-10}& \multicolumn{2}{c|}{STL-10}& \multicolumn{2}{c}{Tiny-ImageNet}\\ 
		\cline{3-8}
		& & IS& FID& IS& FID& IS& FID\\
		\hline
		StyleGAN2~\cite{stylegan2}& \multirow{3}{*}{$128\times 128$}& 
		8.266$\pm$0.094& 8.80& 8.660$\pm$0.212& 9.64& 10.082$\pm$0.222& 12.31\\
		StyleGAN2 w/ GDF (Ours)& & 8.517$\pm$0.092& 8.70& 8.801$\pm$0.217& 9.42& 10.134$\pm$0.219& 10.12\\
		StyleGAN2 w/ LDF (Ours)& & 8.398$\pm$0.101& 8.68& 8.777$\pm$0.202& 9.46& 10.134$\pm$0.237& 10.66\\
		\hline
		StyleGAN2~\cite{stylegan2}& \multirow{3}{*}{$256\times 256$}& 
		9.143$\pm$0.142& 9.68& 9.446$\pm$0.179& 10.46& 10.371$\pm$0.429&12.69\\
		StyleGAN2 w/ GDF (Ours)& & 9.221$\pm$0.118& 9.56& 9.508$\pm$0.152& 10.29& 10.522$\pm$0.375&10.04\\
		StyleGAN2 w/ LDF (Ours)& & 9.201$\pm$0.131& 9.61& 9.495$\pm$0.150& 10.46& 10.498$\pm$0.343&10.38\\
		\hline
	\end{tabular}
	\label{tab6-1}
\end{table*}

\subsection{Visualization}

The generated samples on MNIST~\cite{mnist}, CIFAR-10~\cite{cifar10}, STL-10~\cite{stl10} and Tiny-ImageNet~\cite{tinyimagenet} are visualized to give a intuitive result. 

\begin{figure}[h]
	\centering
	\includegraphics[width=3.3in]{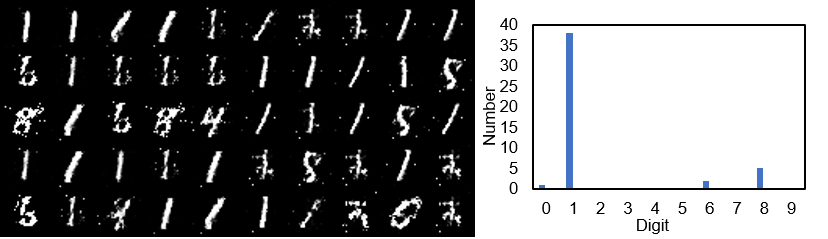}
	\caption{The samples in a mini-batch generated by GAN on MNIST, and the number of samples per class in the mini-batch. }
	\label{fig3}
\end{figure}

\begin{figure}[h]
	\centering
	\includegraphics[width=3.3in]{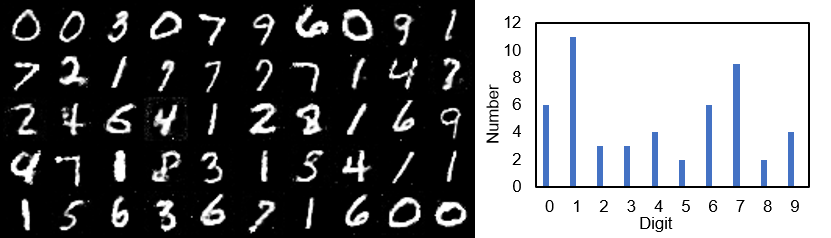}
	\caption{The samples in a mini-batch generated by GAN with GDF on MNIST, and the number of samples per class in the mini-batch.}
	\label{fig4}
\end{figure}

In Fig.~\ref{fig3}, we visualize the generated samples in a mini-batch of the original GAN~\cite{gan} framework. We see in the histogram that most samples contain the digit 1, which indicates that mode collapse occurs. In Fig.~\ref{fig4}, we visualize the generated samples of the framework trained with GDF, which illustrates that mode collapse has been suppressed.

\begin{figure*}[h]
	\centering
	\includegraphics[width=6.5in]{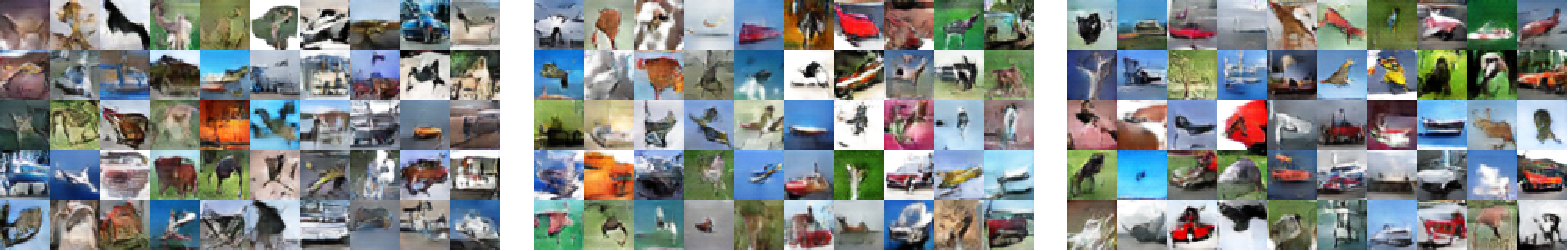}
	\caption{The samples in a mini-batch generated by DCGAN (left), DCGAN with LDF (middle), and DCGAN with GDF (right) on CIFAR-10. }
	\label{fig5}
\end{figure*}

\begin{figure*}[h]
	\centering
	\includegraphics[width=6.5in]{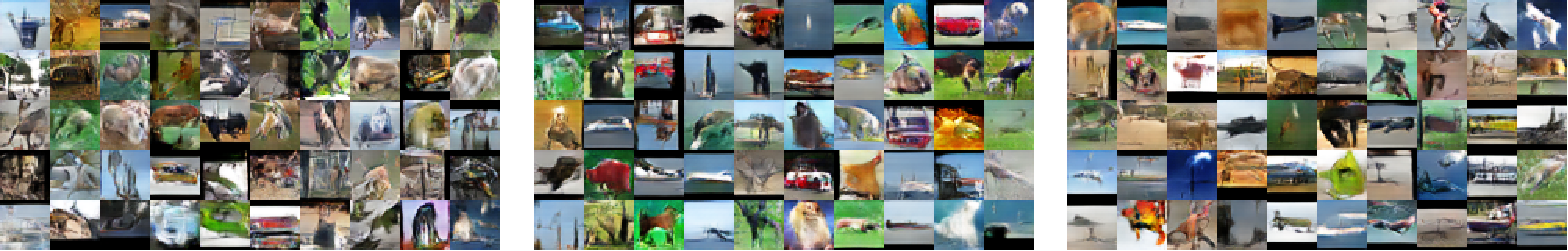}
	\caption{The samples in a mini-batch generated by DCGAN (left), DCGAN with LDF (middle), and DCGAN with GDF (right) on STL-10.}
	\label{fig6}
\end{figure*}

\begin{figure*}[h]
	\centering
	\includegraphics[width=6.5in]{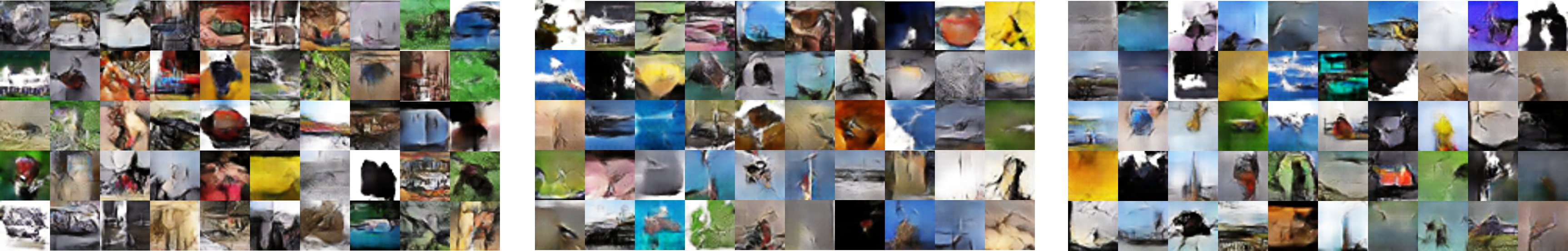}
	\caption{The samples in a mini-batch generated by DCGAN (left), DCGAN with LDF (middle), and DCGAN with GDF (right) on Tiny-ImageNet.}
	\label{fig7}
\end{figure*}

In Figs.~\ref{fig5},~\ref{fig6}, and~\ref{fig7}, some generated samples on CIFAR-10, STL-10, Tiny-ImageNet are visualized. It can be observed that DCGAN~\cite{dcgan} can reach a better performance with GDF.

\subsection{Efficiency}
We assert that the computation and time cost of GDF and LDF can almost be ignored. Thus, the floating point operations (FLOPs) and time cost are discussed. For GDF, the overall mean and standard deviation of real data are required, but they can be calculated before training. Then the values can be saved as a part of the dataset, so repeated calculation is not necessary. Therefore, the computation cost of GDF is less than LDF, and we will mainly discuss LDF.

In one training iteration, LDF requires the calculation of the means and standard deviations of a batch of generated data and all previous real data. Taking DCGAN~\cite{dcgan} trained on CIFAR-10~\cite{cifar10} as an example, we will try to compute the parameters. 

The generated data has four dimensions, and the size is (128, 3, 32, 32), representing the batch size, the number of channels, the width, and the height, respectively. First, the means will be calculated on the first dimension in our design, which contains 128 values for one mean. Therefore, it takes 127 add operations and 1 division operation. There are a total of $3\times 32\times 32=3072$ means, so the generated ones will take 393.216K FLOPs. Second, one standard deviation will also be calculated with 128 values. It takes 128 add operations to subtract the mean, 128 multiplication operations to get the square, 127 add operations to sum the results, 1 division operation, and 1 square root operation. There are also $3072$ standard deviations, so the generated ones will take about 1.183M FLOPs.

As for the real data, the previous means and standard deviations are recorded. With a new batch of data, one mean takes 1 multiplication operation to get the sum of previous values, 128 add operations to get the new sum, and 1 division operation. As a result, the means of the real data will take 399.36K FLOPs. One standard deviation will take 2 multiplication operations to get the previous sum, 128 add operations to subtract the mean, 128 multiplication operations to get the square, 128 add operations to get the new sum, 1 division operation, and 1 square root operation. As a result, the standard deviations of the real data will take about 1.192M FLOPs. 

In the training process, the means and standard deviations for real and generated data will both be calculated once in one iteration. To summarize, LDF will take about 3.167M FLOPs in one training iteration. As mentioned, GDF will take fewer FLOPs. In comparison, one front propagation of DCGAN~\cite{dcgan} will consume more than 14G FLOPs. Therefore, we argue that the computation cost of our method can almost be ignored. However, the efficiency may be affected by the batch size and number of data elements. The computation complexity is $O(mn)$ for $n$-tuple data where $m$ is the batch size. Thus, the complexity is proportional to the product of the batch size and the data element number.

To further demonstrate that the cost is very small, we record the training time of DCGAN~\cite{dcgan}, where CIFAR-10~\cite{cifar10} provides the data. The results are shown in TABLE~\ref{tab6-2}. It can be observed that the training time is not increased noticeably, which proves that the time cost of our method is small.

\begin{table}[h]
	\centering
	\renewcommand\arraystretch{1.15}
	\caption{Training time of DCGAN. The models are trained on CIFAR-10 for 200 epochs. Per Calc. represents the time for calculating the proposed penalty in one iteration.}
	\begin{tabular}{l||c|c|c|c}
		\hline
		Method& Total& Per Epoch& Per Iteration& Per Calc. \\ 
		\hline
		DCGAN~\cite{dcgan}& 5830.6s& 29.2s& 0.04s& 0.00ms\\
		DCGAN w/ GDF      & 5892.0s& 29.5s& 0.04s& 0.26ms\\
		DCGAN w/ LDF      & 5909.3s& 29.5s& 0.04s& 0.39ms\\
		\hline
	\end{tabular}
	\label{tab6-2}
\end{table}

\section{Discussion}
\label{sec:discussion}
We will discuss the advantages and limitations of our method. In the same way that for the commonly used regularization methods, the main advantage of our method is that no manual design is required. Many techniques for combating mode collapse tried to force the generator to generate diverse data. These approaches will need to be modified for a decent performance if a dataset contains data that is not diverse originally. Thus, the adaptability for various scenes of our method is better.

Compared to regularization methods, our method has two advantages. First, GDF and LDF only need a small amount of computation. The number of latent nodes in a network is usually much higher than the input layer. Therefore, compared to editing gradients in each latent node, our method requires less computation. Second, GDF and LDF are independent of the front propagation process, so they are easy to be transplanted. The distribution parameters are calculated independently, and only a penalty term is provided to be added to the training objective. In comparison, some regularization methods require changing the gradients in the front propagation process, which is more difficult to be applied in different frameworks.

Nevertheless, the limitation of our method is also obvious. First, because we utilize the mean and standard deviation of a batch to match those of the generated distribution, a large batch size must be applied. In some applications, the method will not work. For instance, for unpaired image-to-image translation, the batch size is usually set to 1 to achieve the best performance. For high-resolution image synthesis, sometimes, a single GPU can only deal with one image. Second, the mean and standard deviation only convey partial information, which is not enough to represent a distribution, especially for high-dimensional ones. Therefore, the applications may be limited, so we consider finding a better representation of data composition, e.g., using kernel density estimation, is valuable in the future. Last, we only demonstrate the effectiveness of our method when the sub-distributions are linearly combined. Although the experiments prove that it is most likely correct that most real distributions contain linear parts, the possible effectiveness of non-linearly combined mixed distributions is not thoroughly tested. In the future, exploring non-linearly combined mixed distributions may be valuable to refine our method.

\section{Conclusion}
\label{sec:conclusion}
In this work, we analyze the causes of mode collapse in GANs. Then we propose a distribution fitting method to suppress mode collapse. We present GDF and LDF to cope with different situations, and the experiments demonstrate the effectiveness of our method. The method is simple and effective, and only a small amount of computation is required with no manual design. However, the requirement of a large batch size limits the applications. In the future, we consider identifying alternative representations of data composition and utilizing it to achieve distribution fitting will be valuable. It is also possible to integrate the method into the virtual training criterion for a novel distance representation between distributions. To accurately estimate the real distribution parameters, one more research interest is exploring the minimum sample size for achieving GDF. The works will also help GANs to be applied in more research fields.

\section*{Acknowledgments}
This work was supported in part by the National Key Research and Development Program of China with ID 2018AAA0103203.


\appendix
{\appendices
	\section*{Network in Mixture of Gaussian Experiments}
	The network is the same as the one utilized in~\cite{unrolledgan}. We also provide the detailed architectures in TABLE~\ref{tab7} and~\ref{tab8}.
	\begin{table}[h]
		\centering
		\caption{Architecture of the generator in mixture Gaussian experiments.}
		\begin{tabular}{l||c|c}
			\hline
			Layers& Input Channel & Output Channel\\
			\hline
			Fully Connected & 256 & 128\\
			ReLU & - & -\\
			Fully Connected & 128 & 128\\
			ReLU & - & -\\
			Fully Connected & 128 & 2\\
			\hline
		\end{tabular}
		
		\label{tab7}
	\end{table}
	
	\begin{table}[h]
		\centering
		\caption{Architecture of the discriminator in mixture Gaussian experiments.}
		\begin{tabular}{l||c|c}
			\hline
			Layers& Input Channel & Output Channel\\
			\hline
			Fully Connected & 2 & 128\\
			ReLU & - & -\\
			Fully Connected & 128 & 128\\
			ReLU & - & -\\
			Fully Connected & 128 & 1\\
			Sigmoid & - & -\\
			\hline
		\end{tabular}
		\label{tab8}
	\end{table}
	\section*{Network in Stacked MNIST Experiments}
	We use the DCGAN-like network utilized in PacGAN~\cite{pacgan}, which is different from the original DCGAN~\cite{dcgan}. This network architecture can better represent mode collapse problems. The detailed architectures are shown in TABLE~\ref{tab9} and~\ref{tab10}. In the experiments of TABLE~\ref{tab5}, the output channel number of each layer of the discriminator will be multiplied by $\frac{1}{2}$ or $\frac{1}{4}$.
	
	\begin{table}[h]
		\centering
		\caption{Architecture of the generator in stacked MNIST experiments. k, s, and p represent kernel size, stride, and padding size. Channels show the input-output channels.}
		\begin{tabular}{l||c|l}
			\hline
			Layers& Configurations & Channels\\
			\hline
			Fully Connected & - & 256-1024\\
			Reshape & - & 1024-64\\
			Batch Normalization & - & -\\
			ReLU & - & -\\
			Transposed Convolution & \{k3,s2,p1\} & 64-32\\
			Batch Normalization & - & -\\
			ReLU & - & -\\
			Transposed Convolution & \{k3,s2,p1\} & 32-16\\
			Batch Normalization & - & -\\
			ReLU & - & -\\
			Transposed Convolution & \{k3,s2,p1\} & 16-8\\
			Batch Normalization & - & -\\
			ReLU & - & -\\
			Transposed Convolution & \{k3,s1,p1\} & 8-3\\
			Tanh & - & -\\
			\hline
		\end{tabular}
		\label{tab9}
	\end{table}
	
	\begin{table}[h]
		\centering
		\caption{Architecture of the discriminator in stacked MNIST experiments. k, s, and p represent kernel size, stride, and padding size. Channels show the input-output channels.}
		\begin{tabular}{l||c|l}
			\hline
			Layers& Configurations & Channels\\
			\hline
			Convolution & \{k3, s2, p1\} & 3-8\\
			Batch Normalization & - & -\\
			Leaky ReLU & 0.3 & -\\
			Convolution & \{k3, s2, p1\} & 8-16\\
			Batch Normalization & - & -\\
			Leaky ReLU & 0.3 & -\\
			Convolution & \{k3, s2, p1\} & 16-32\\
			Batch Normalization & - & -\\
			Leaky ReLU & 0.3 & -\\
			Reshape& - & 32-512\\
			Fully Connected & - & 512-1\\
			Sigmoid & - & - \\
			\hline
		\end{tabular}
		\label{tab10}
	\end{table}
	
	\section*{Training Setup of Experiments}
	In Section V.A, the training setup is as follows. The optimizers are Adam~\cite{adam}. For the generator, the learning rate is 0.001. For the discriminator, the learning rate is 0.0001. The batch size is 512, and the network is trained for 5,000 iterations. The experiments are carried out on a Linux platform with an Intel Core i7-8700 CPU, two USCORSAIR 16GB DDR4 RAMs, and an NVIDIA TITAN Xp GPU.
	
	In Section V.B, the training setup is as follows. The optimizers are Adam~\cite{adam} with a learning rate 0.0002. The original batch size is 128, and the network is trained for 200 epochs. The experiments are carried out on a Linux platform with an Intel Xeon(R) E5-2630 v3 CPU, four Samsung 16GB DDR4 RAMs, and three NVIDIA TITAN RTX GPUs.
	
	In Section V.C, the training setup is as follows.
	
	For GANs~\cite{gan}, DCGAN~\cite{dcgan}, WGAN~\cite{wgan} and WGAN-GP~\cite{wgangp}, the optimizers are Adam~\cite{adam}. The learning rate is 0.001 for GANs, DCGAN, and WGAN-GP and 0.00005 for WGAN. The batch size is 128, and the networks are trained for 200 epochs. The experiments are carried out on a Linux platform with an Intel Xeon(R) E5-2630 v3 CPU, four Samsung 16GB DDR4 RAMs, and three NVIDIA TITAN RTX GPUs.
	
	For StyleGAN2~\cite{stylegan2}, the optimizers are Adam~\cite{adam} with a learning rate 0.002. The batch size is 64, and the networks are trained for 200000 iterations. The experiments are carried out on a Linux platform with an Intel Xeon(R) E5-2630 v3 CPU, four Samsung 16GB DDR4 RAMs, and three NVIDIA TITAN RTX GPUs.
	
	\begin{figure}[b]
		\centering
		\subfloat[]{
			\includegraphics[width=1.7in]{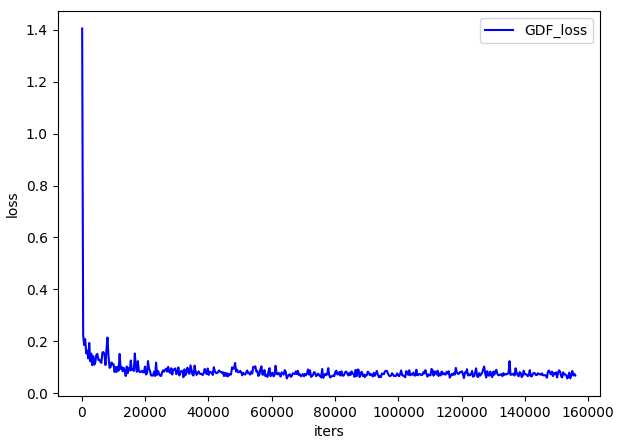}} 
		\subfloat[]{
			\includegraphics[width=1.7in]{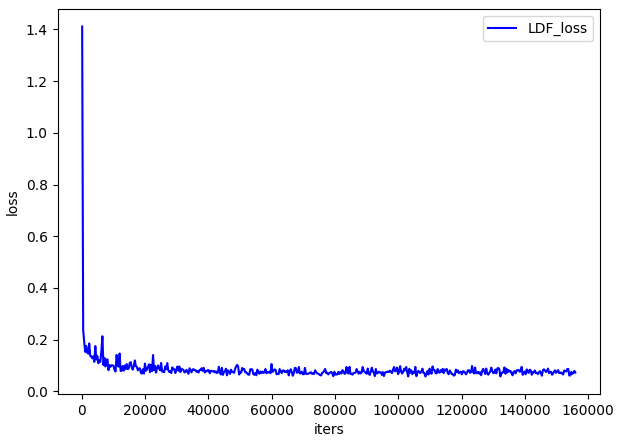}}
		\caption{The convergence curves of (a) GDF and (b) LDF on CIFAR-10. The horizontal coordinate axis represents the training iteration and the vertical one represents the values.}
		\label{fig_cifar}
	\end{figure}
	
	\section*{Convergence Curves on Natural Datasets}
	The convergence curves of GDF and LDF with DCGAN under different datasets are shown in Figs.~\ref{fig_cifar}, \ref{fig_stl10} and ~\ref{fig_tiny}. The penalty terms are minimized as expected, proving the effectiveness.
	
	\begin{figure}[t]
		\centering
		\subfloat[]{
			\includegraphics[width=1.7in]{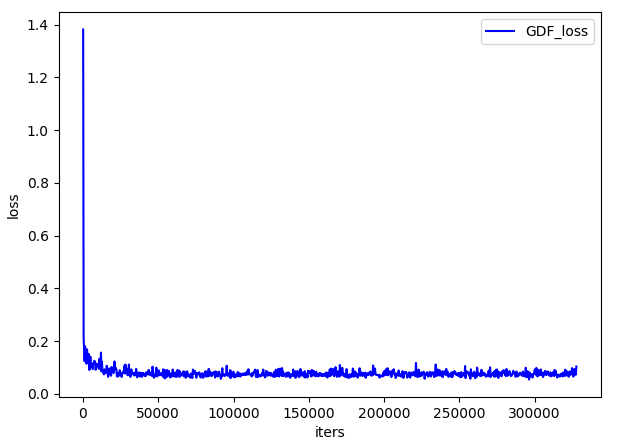}} 
		\subfloat[]{
			\includegraphics[width=1.7in]{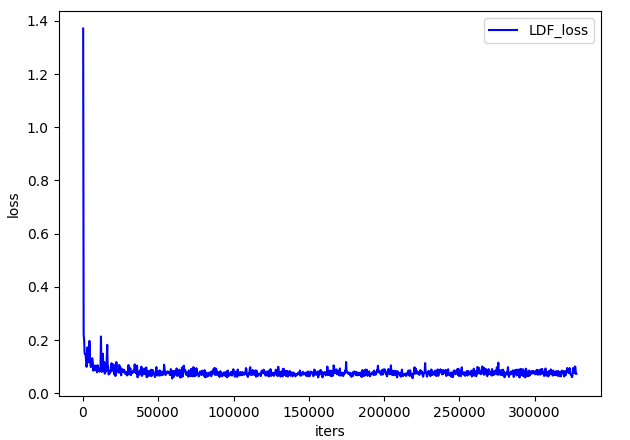}}
		\caption{The convergence curves of (a) GDF and (b) LDF on STL-10. The horizontal coordinate axis represents the training iteration and the vertical one represents the values.}
		\label{fig_stl10}
	\end{figure}
	
	\begin{figure}[t]
		\centering
		\subfloat[]{
			\includegraphics[width=1.7in]{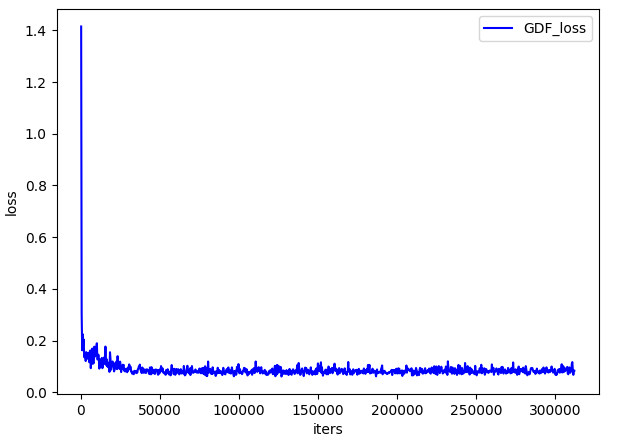}} 
		\subfloat[]{
			\includegraphics[width=1.7in]{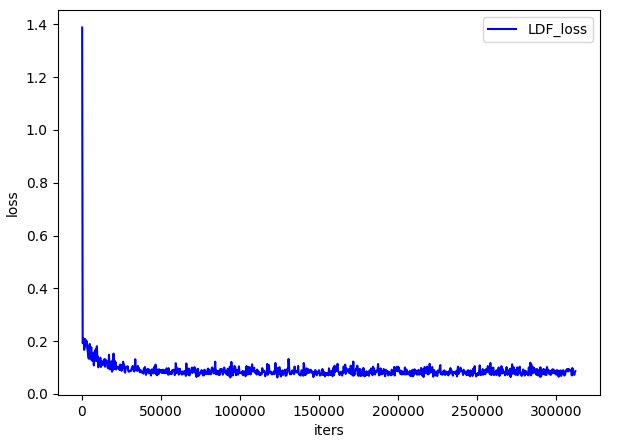}}
		\caption{The convergence curves of (a) GDF and (b) LDF on Tiny-ImageNet. The horizontal coordinate axis represents the training iteration and the vertical one represents the values.}
		\label{fig_tiny}
	\end{figure}
	
}


\bibliographystyle{plain}
\bibliography{IEEEabrv,refs}

\newpage

%
%
%
%

\vfill

\end{document}